\documentclass[10pt]{article}

\PassOptionsToPackage{table}{xcolor}

\usepackage{scai}

\definecolor{promptblue}{HTML}{2B5B9E}
\definecolor{prompttea}{HTML}{1F7A6F}
\definecolor{promptplum}{HTML}{8A5CD6}
\definecolor{promptamber}{HTML}{B45A1C}
\definecolor{promptolive}{HTML}{556B2F}
\definecolor{citepink}{HTML}{E63E91}

\hypersetup{
  colorlinks=true,
  citecolor=promptblue,
  linkcolor=promptplum,
  urlcolor=citepink,
}

\usepackage[authoryear, sort&compress, round]{natbib}

\usepackage{booktabs}
\usepackage{multirow}
\usepackage{tabularx}
\newcolumntype{C}{>{\centering\arraybackslash}X}
\usepackage{amsthm}
\newtheorem{proposition}{Proposition}
\usepackage{nicefrac}
\usepackage{placeins}
\usepackage{wrapfig}
\usepackage{listings}
\usepackage{float}

\makeatletter
\g@addto@macro\normalsize{%
  \setlength{\abovedisplayskip}{3pt plus 1pt minus 1pt}%
  \setlength{\abovedisplayshortskip}{0pt plus 1pt}%
  \setlength{\belowdisplayskip}{3pt plus 1pt minus 1pt}%
  \setlength{\belowdisplayshortskip}{2pt plus 1pt minus 1pt}%
}
\g@addto@macro\small{%
  \setlength{\abovedisplayskip}{2pt plus 1pt minus 1pt}%
  \setlength{\abovedisplayshortskip}{0pt plus 1pt}%
  \setlength{\belowdisplayskip}{2pt plus 1pt minus 1pt}%
  \setlength{\belowdisplayshortskip}{1pt plus 1pt minus 1pt}%
}
\makeatother
\normalsize

\tcbuselibrary{listings}

\titlecontents{section}
  [1.6em]
  {\addvspace{6pt}\bfseries}
  {\contentslabel[\thecontentslabel]{1.6em}}
  {\hspace*{-1.6em}}
  {\hfill\contentspage}
\titlecontents{subsection}
  [4em]
  {\addvspace{2pt}\normalfont\small}
  {\contentslabel[\thecontentslabel]{2.4em}}
  {\hspace*{-2.4em}}
  {\titlerule*[6pt]{.}\contentspage}

\newtcolorbox{promptbox}[2][promptblue]{
  enhanced,
  colback=#1!4!white,
  colframe=#1!80!black,
  colbacktitle=#1!85!black,
  coltitle=white,
  fonttitle=\bfseries\small,
  title={#2},
  boxrule=0.6pt,
  titlerule=0pt,
  arc=3pt,
  left=8pt, right=8pt, top=6pt, bottom=6pt,
  fontupper=\small,
}

\lstdefinestyle{jsontool}{
  basicstyle=\scriptsize\ttfamily,
  breaklines=true,
  breakatwhitespace=true,
  showstringspaces=false,
  columns=fullflexible,
  keepspaces=true,
  stringstyle=\color{promptplum!85!black},
  keywordstyle=\color{promptblue!85!black}\bfseries,
  commentstyle=\color{gray!70!black}\itshape,
  morestring=[b]",
  morecomment=[l]{//},
  morekeywords={true,false,null},
  literate={:}{{\textcolor{black}{:}}}1,
}

\newtcblisting{toolsjson}[2][promptolive]{
  enhanced,
  colback=#1!4!white,
  colframe=#1!80!black,
  colbacktitle=#1!85!black,
  coltitle=white,
  fonttitle=\bfseries\small,
  title={#2},
  boxrule=0.6pt,
  titlerule=0pt,
  arc=3pt,
  left=8pt, right=8pt, top=6pt, bottom=6pt,
  listing only,
  listing options={style=jsontool},
}

\renewcommand{\authorfont}{\normalsize\rmfamily}

\makeatletter
\renewcommand{\scaiauthor}[2]{%
  \stepcounter{scai@numauthors}%
  \ifnum\value{scai@numauthors}=1
    \gdef\@scai@authorlist{\mbox{\textbf{#2}\textsuperscript{\@scai@rendertags{#1}}}}%
  \else
    \g@addto@macro\@scai@authorlist{, \mbox{\textbf{#2}\textsuperscript{\@scai@rendertags{#1}}}}%
  \fi
}

\renewcommand{\@scai@renderaffils}{%
  \begingroup
  \normalsize
  \def\@scai@affilfirst{1}%
  \@for\@scai@tmp:=\@scai@affillist\do{%
    \ifnum\@scai@tmp=3\relax \par\def\@scai@affilfirst{1}\fi
    \if\@scai@affilfirst 1\def\@scai@affilfirst{0}\else ,\space\fi
    \textsuperscript{\@scai@tmp}%
    \csname @scai@affil@\@scai@tmp\endcsname
  }%
  \par
  \endgroup
}
\makeatother

\makeatletter
\newcommand{\authorbreak}{%
  \g@addto@macro\@scai@authorlist{\\}%
  \let\@scai@authorsaved\scaiauthor
  \def\scaiauthor##1##2{%
    \stepcounter{scai@numauthors}%
    \g@addto@macro\@scai@authorlist{\mbox{\textbf{##2}\textsuperscript{\@scai@rendertags{##1}}}}%
    \let\scaiauthor\@scai@authorsaved
  }%
}
\makeatother

\makeatletter
\renewcommand{\maketitle}{%
  \bgroup
  \setlength{\parindent}{0pt}%
  \setlength{\parskip}{0pt}%
  \vspace*{3pt}%
  {\centering\titlefont\@title\par}%
  \vspace{14pt}%
  {\centering\authorfont\@scai@authorlist\par}%
  \vspace{8pt}%
  {\centering\@scai@renderaffils\par}%
  \if@scai@haskeywords
    \vspace{6pt}%
    {\centering\keywordsfont{\sffamily\textbf{Keywords:}} \@scai@keywords\par}%
  \fi
  \vspace{10pt}%
  \@scai@renderabstract
  \egroup
  \thispagestyle{scaifirstpage}%
}
\makeatother

\title{OpenSearch-VL: An Open Recipe for Frontier\\Multimodal Search Agents}

\contribnote{corr}{$\dagger$}{Corresponding author.}

\scaiauthor{1,2}{Shawn Chen}
\scaiauthor{3}{Kaituo Feng}
\scaiauthor{1}{Hangting Chen}
\scaiauthor{3}{Wenxuan Huang}
\scaiauthor{3}{Dasen Dai}
\scaiauthor{2,4}{Quanxin Shou}
\authorbreak
\scaiauthor{3}{Yunlong Lin}
\scaiauthor{3}{Xiangyu Yue}
\scaiauthor{4}{Shenghua Gao}
\scaiauthor{1,corr}{Tianyu Pang}

\affiliation{1}{Tencent Hunyuan}
\affiliation{2}{University of California, Los Angeles}
\affiliation{3}{The Chinese University of Hong Kong}
\affiliation{4}{The University of Hong Kong}

\metadata[\raisebox{-0.35ex}{\includegraphics[height=1.3em]{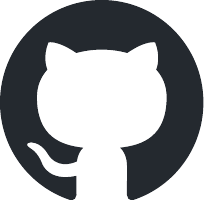}}\,Home]{\url{https://github.com/shawn0728/OpenSearch-VL}}
\metadata[\raisebox{-0.35ex}{\includegraphics[height=1.3em]{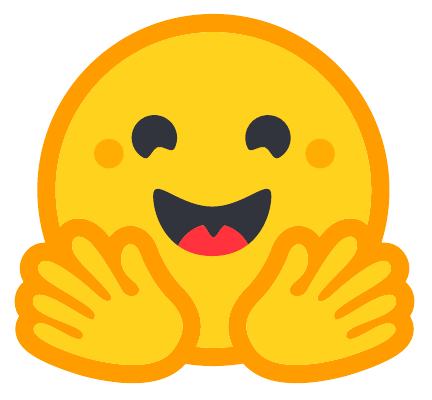}}\,Huggingface]{\url{https://huggingface.co/OpenSearch-VL}}

\begin{abstract}
Deep search has become a crucial capability for frontier multimodal agents, enabling models to solve complex questions through active search, evidence verification, and multi-step reasoning. Despite rapid progress, top-tier multimodal search agents remain difficult to reproduce, largely due to the absence of open high-quality training data, transparent trajectory synthesis pipelines, or detailed training recipes. To this end, we introduce \textbf{OpenSearch-VL}, a fully open-source recipe for training frontier multimodal deep search agents with agentic reinforcement learning. First, we curated a dedicated pipeline to construct high-quality training data through Wikipedia path sampling, fuzzy entity rewriting, and source-anchor visual grounding, which jointly reduce shortcuts and one-step retrieval collapse. Based on this pipeline, we curate two training datasets, \emph{SearchVL-SFT-36k} for SFT and \emph{SearchVL-RL-8k} for RL. Besides, we design a diverse tool environment that unifies text search, image search, OCR, cropping, sharpening, super-resolution, and perspective correction, enabling agents to combine active perception with external knowledge acquisition. Finally, we propose a multi-turn fatal-aware GRPO training algorithm that handles cascading tool failures by masking post-failure tokens while preserving useful pre-failure reasoning through one-sided advantage clamping. Built on this recipe, OpenSearch-VL delivers substantial performance gains, with over 10-point average improvements across seven benchmarks, and achieves results comparable to proprietary commercial models on several tasks. We will release all data, code, and models to support open research on multimodal deep search agents.
\end{abstract}

\begin{document}

\maketitle

\section{Introduction}
\label{sec:introduction}

Multimodal deep search has emerged as a critical direction for multimodal large language models (MLLMs), enabling them to evolve from passive visual understanding systems into agents that actively search evidence, verify facts, and reason over knowledge-intensive visual queries \citep{huang2026vision,feng2026gen,chen2026unify}.
However, frontier multimodal search agents remain difficult to reproduce, as their training data, code are often proprietary or insufficiently disclosed \citep{seed2.0,huang2026vision,singh2025openai, kimiteam2026kimik25visualagentic}. As a result, the community still lacks a fully open recipe for building, analyzing, and improving strong multimodal search agents.

Among these missing components, high-quality training data is a central bottleneck. The strongest frontier systems are still largely dominated by well-funded commercial corporations \citep{Claude_4,comanici2025gemini}, where the data sources, filtering criteria, expert demonstrations, and tool-use trajectories are typically kept private. 
This makes it difficult to reproduce advanced multimodal search capabilities or systematically study which data properties are essential for agentic search behavior.
The issue is even more pronounced in multimodal settings, where effective training data must capture image-grounded understanding, multi-hop retrieval, evidence verification, and long-horizon tool use rather than simple visual question answering. 
Therefore, releasing high-quality training data is crucial for making frontier multimodal search agent research more transparent, reproducible, and accessible.

Beyond data, training multimodal search agents also poses unique challenges, especially when applying agentic reinforcement learning (agentic RL) \citep{fan2026exploring,webwatcher,huang2026vision} to long-horizon tool-use settings. 
Agentic search trajectories involve multiple rounds of reasoning, tool invocation, and observation integration, where a single malformed call, timeout, irrelevant query, or repeated failure can invalidate the remaining rollout. 
Simply discarding such trajectories wastes useful pre-failure reasoning, while training on the full rollout introduces noisy gradients from meaningless post-failure tokens. 
Another practical challenge is that real-world visual inputs are often imperfect, such as blurred photos, low-resolution thumbnails, skewed documents, and crowded screenshots. 
In these cases, searching alone is insufficient, and the agent must first crop, enhance, rectify, or parse the visual evidence before reliable search can begin. 
However, most existing multimodal search agents focus mainly on retrieval and do not jointly address robust visual pre-processing and failure-aware long-horizon RL.

In this work, we introduce \textbf{OpenSearch-VL}, a fully open recipe for training frontier multimodal deep search agents with agentic RL. 
Our recipe addresses the above challenges from data, tools, and training. 
First, we develop a dedicated data curation pipeline to build high-quality training data. 
Starting from the Wikipedia hyperlink graph, we sample multi-hop entity paths and convert them into multi-hop VQA instances by rewriting intermediate entities into fuzzy descriptions, followed by a carefully designed filtering mechanism.
This design avoids single-hop image lookup shortcuts and encourages the agent to learn multi-hop search and reasoning behaviors.
This pipeline yields two training datasets \textbf{SearchVL-SFT-36k} for SFT and \textbf{SearchVL-RL-8k} for agentic RL.
Second, we build a tool environment that goes beyond retrieval-only multimodal agent. 
In addition to search, the agent is equipped with OCR, cropping, sharpening, super-resolution, and perspective correction, allowing it to handle imperfect visual inputs in real-world scenarios before querying external knowledge. 
Finally, we develop an agentic RL algorithm based on GRPO ~\citep{guo2025deepseek} for long-horizon multimodal tool use, where multi-step interactions often lead to cascading tool failures. To address this issue, we introduce fatal-aware token masking that removes invalid post-failure suffixes from optimization, while preserving useful pre-failure reasoning through one-sided advantage clamping. This enables the model to learn from partially successful trajectories without being affected by noisy gradients from failed rollouts.
Together, these designs enable OpenSearch-VL to learn robust long-horizon search behavior over multimodal evidence in real-world scenarios.

Experiments across multimodal deep search benchmarks show that OpenSearch-VL consistently improves over strong baselines. 
For example, compared with the Qwen3-VL-30B-A3B \citep{Qwen3-VL} agentic baseline, our model improves the average score from 47.8 to 61.6, with large gains on VDR (+13.3) \citep{vdr}, MMSearch (+24.5) \citep{mmsearch}, FVQA (+10.2) \citep{wang2017fvqa}, and InfoSeek (+16.2) \citep{chen2023can}. 
Moreover, OpenSearch-VL achieves comparable or even better performance than proprietary commercial models on several benchmarks.
In summary, our main contributions can be summarized as follows:

\begin{itemize}[leftmargin=*, noitemsep]
    \item We introduce \textbf{OpenSearch-VL}, a fully open recipe for training frontier multimodal deep search agents. 
    We will release the training data, code, and models to provide an open foundation for reproducible research on multimodal agentic search.
    \item We build the key components required for training advanced multimodal search agents, including high-quality image-grounded multi-hop training data, a diverse tool environment, and a multi-turn fatal-aware GRPO algorithm. 
    \item Extensive experiments demonstrate the effectiveness of our recipe. For example, our trained OpenSearch-VL-30B-A3B brings an average improvement of 13.8 points across 7 multimodal deep search benchmarks.
\end{itemize}

\vspace{-0.1in}
\section{Preliminaries}
\label{sec:method}

\textbf{Problem Formulation.}
Given an input image $I_0$ and a question $q$, the agent answers $q$ by interleaving reasoning with tool calls over a diverse tool set $\mathcal{T} = \mathcal{T}_v \cup \mathcal{T}_s$, where $\mathcal{T}_v$ contains \emph{visual} tools that transform or parse images and $\mathcal{T}_s$ contains \emph{retrieval} tools that query external knowledge.
At step~$l$, the model conditions on the accumulated history
\begin{equation}
  h_l = \bigl(\mathcal{I}_l,\;\, q,\;\, \mathbf{a}_{<l},\;\, \mathbf{o}_{<l}\bigr),
  \label{eq:history}
\end{equation}
where $\mathcal{I}_l$, $\mathbf{a}_{<l}$, and $\mathbf{o}_{<l}$ denote the images, actions, and observations accumulated up to step~$l$.
The interaction unfolds as a multi-turn trajectory
\begin{equation}
    \tau = \bigl\{(h_0, a_0, o_0),\; (h_1, a_1, o_1),\; \dots,\; (h_{L-1}, a_{L-1}, o_{L-1}),\; (h_L, a_L)\bigr\},
    \label{eq:trajectory}
\end{equation}
where the final step emits the answer without a subsequent observation. Following the ReAct~\citep{yao2022react} think-then-act convention, each action decomposes as $a_l = [z_l,\, c_l]$, where $z_l$ is a reasoning trace, and $c_l$ denotes a tool invocation for $l<L$ or the final response for $l=L$.

\textbf{Multimodal Observations and Active Visual Context.}
Unlike text-only formulations~\citep{jin2025searchr1trainingllmsreason}, our environment $\mathcal{E}$ returns \emph{multimodal} observations. Given a control command $c_l$, $\mathcal{E}$ deterministically routes the invocation by tool family,
\begin{equation}
    o_l \;=\; \mathcal{E}(c_l, h_l) \;\in\;
    \begin{cases}
        \mathcal{O}^{\text{img}}, & \text{if } c_l \text{ invokes } t \in \mathcal{T}_v \setminus \{\textsc{OCR}\}, \\[2pt]
        \mathcal{O}^{\text{txt}}, & \text{if } c_l \text{ invokes } t \in \mathcal{T}_s \cup \{\textsc{OCR}\},
    \end{cases}
    \label{eq:obs-routing}
\end{equation}
so that $\mathcal{O} = \mathcal{O}^{\text{img}} \cup \mathcal{O}^{\text{txt}}$. The active visual context grows monotonically as $\mathcal{I}_l = \{I_0\} \cup \{o_k : k < l,\; o_k \in \mathcal{O}^{\text{img}}\}$; historical visual observations are strictly preserved so that the policy can cross-reference multi-hop visual transformations (e.g.\ a localised \textsc{Crop} against its \textsc{SuperResolution}-enhanced counterpart). The rollout is compactly written as $\tau \sim \pi_\theta(\cdot \mid I_0, q) \otimes \mathcal{E}$, where $\otimes$ denotes the strict interleaving of policy-emitted actions and environment-returned observations.

\textbf{Trajectory Likelihood.}
The policy models the joint trajectory probability via standard autoregressive factorisation:
\begin{equation}
    \pi_\theta(\tau \mid I_0, q) \;=\; \prod_{l=0}^{L} P_\theta(a_l \mid h_l) \;=\; \prod_{l=0}^{L} P_\theta(z_l \mid h_l)\, P_\theta(c_l \mid h_l, z_l).
    \label{eq:traj-likelihood}
\end{equation}
Observations $o_l$ are excluded from the generative probability mass since they are exogenous outputs of $\mathcal{E}$; they influence the trajectory likelihood only by modulating subsequent histories $h_{l'}$ for $l' > l$. This factorisation is the object directly supervised by SFT (Eq.~\ref{eq:sft}) and the basis of the per-token importance ratio in our RL objective (Eq.~\ref{eq:our-grpo}).

\textbf{Token-level Generation Mask.}
Optimisation gradients must be restricted to tokens emitted by the policy itself. For \emph{textual observations} (originating from $\mathcal{T}_s$ and \textsc{OCR}), we define an indicator $M_{\text{gen}}(y_t)\in\{0,1\}$ with $M_{\text{gen}}(y_t)=1$ iff token $y_t$ is constituent to a generated action $a_l = [z_l,\, c_l]$, and $M_{\text{gen}}(y_t)=0$ if $y_t$ belongs to an observation span $o_l$. \emph{Image-valued observations} (from $\mathcal{T}_v\setminus\{\textsc{OCR}\}$) are injected directly into the visual backbone and inherently bypass the token-level loss. This protocol, inspired by the retrieved-token masking of Search-R1~\citep{jin2025searchr1trainingllmsreason}, underlies both the SFT objective (Eq.~\ref{eq:sft}) and the fatal-aware RL mask (Eq.~\ref{eq:fatal-mask}); textual serialisations of search results and OCR parses are characteristically noisy and structurally divergent from the policy's intrinsic generative distribution, and including them in the loss destabilises training.

\textbf{Search Tools.}
\emph{OpenSearch-VL} is equipped with a suite of tools covering three complementary functions: \textbf{retrieval} (\textsc{TextSearch}, \textsc{ImageSearch}) for gathering external evidence, \textbf{image enhancement} (\textsc{Sharpen}, \textsc{SuperResolution}, \textsc{PerspectiveCorrect}) for remedying low-quality inputs, and \textbf{attention and parsing} (\textsc{Crop}, \textsc{OCR}) for localizing and decoding fine-grained content. The suite combines lightweight offline primitives with online services backed by expert models, and is summarized in Table~\ref{tab:search-tools}. Full specifications are deferred to Appendix~\ref{apdx:tool-definition}.

\begin{table}[H]
\centering
\vspace{-5pt}
\caption{The search-oriented tool suite integrated within \textbf{OpenSearch-VL}. The suite spans three complementary functions---\emph{retrieval} for acquiring external information, \emph{image enhancement} for improving low-quality visual inputs, and \emph{attention \& parsing} for focusing on and extracting content from specific regions. Detailed specifications of each tool are provided in Appendix~\ref{apdx:tool-definition}.}
\renewcommand{\arraystretch}{1.15}
\resizebox{\linewidth}{!}{%
\begin{tabular}{llll}
\toprule
\textbf{Tool} & \textbf{Description} & \textbf{Arguments} & \textbf{Tool Output} \\
\midrule
\textsc{TextSearch}          & Web search with page reading and LLM summarization     & Query + TopK           & Query-focused passage summaries \\
\textsc{ImageSearch}         & Reverse image / visual entity search over the web      & Image + TopK           & Visual matches and related webpages \\
\textsc{Sharpen}             & Unsharp-masking based deblurring / detail enhancement  & Image + Amount         & Sharpened image \\
\textsc{SuperResolution}     & Deep super-resolution (EDSR) for low-resolution inputs & Image + Scale          & High-resolution image \\
\textsc{PerspectiveCorrect}  & Auto perspective rectification of skewed documents     & Image                  & Fronto-parallel image \\
\textsc{Crop}                & Extract a user-specified rectangular region            & Image + Coordinates    & Cropped image \\
\textsc{OCR}                 & Structured document parsing with text and layout labels & Image + Flags         & Text blocks with labels and reading order \\
\bottomrule
\end{tabular}}
\label{tab:search-tools}
\vspace{-5pt}
\end{table}

\section{Dataset Curation}
\label{sec:data}

To equip the model with robust reasoning and tool-use capabilities, we design a scalable data curation pipeline (Figure~\ref{fig:data-pipeline}) that synthesizes high-quality trajectories without manual human annotation. The pipeline proceeds in three stages---VQA construction, staged filtering and enhancement, and trajectory synthesis---yielding the final dataset used for the following stage training.

\begin{figure}[t]
  \centering
  \includegraphics[width=\linewidth]{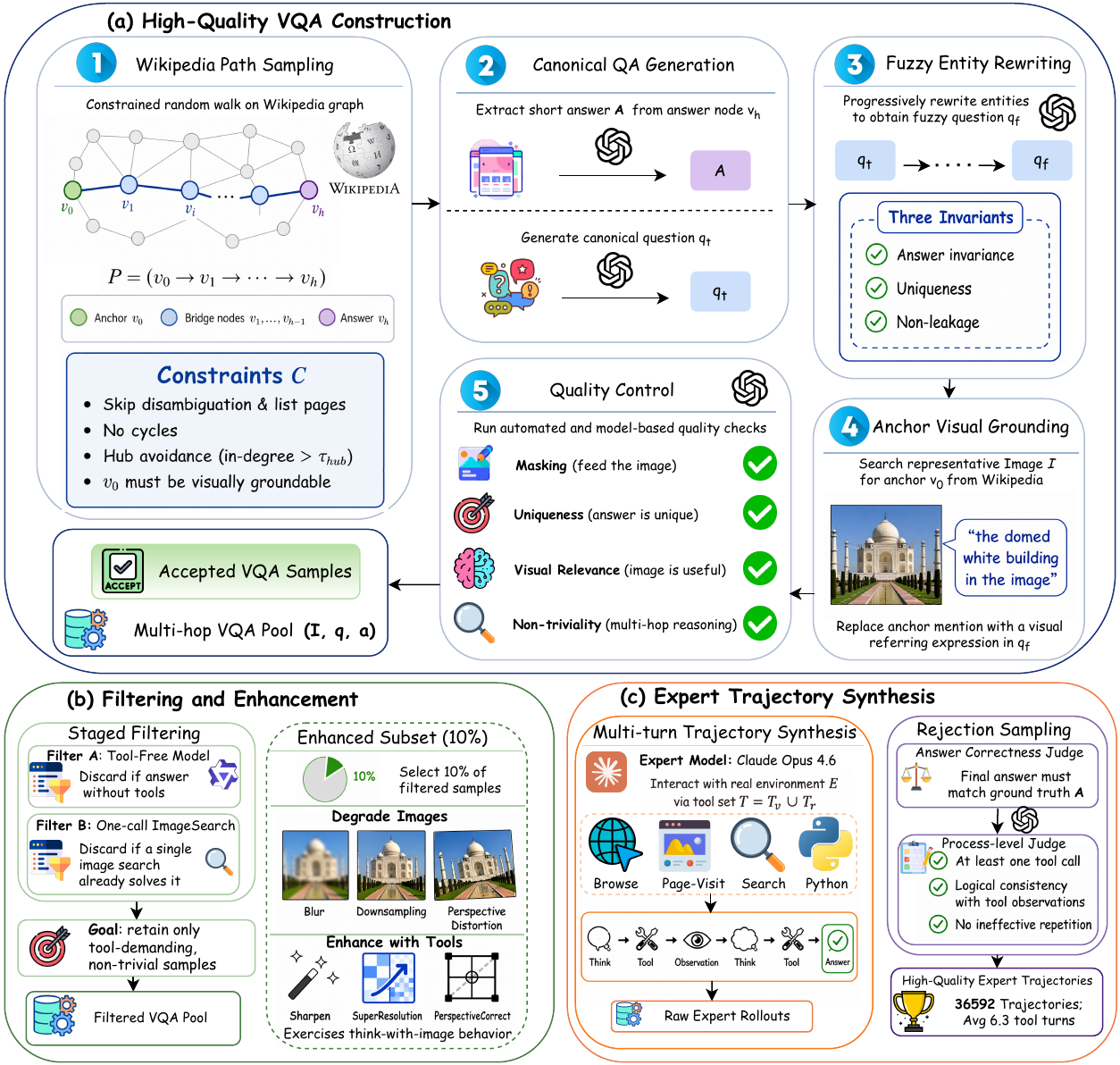}
  \vspace{-0.5cm}
    \caption{\textbf{Overview of the data curation pipeline.}
    (a) Starting from the English Wikipedia hyperlink graph, we construct high-quality multi-hop VQA instances by sampling constrained paths, generating canonical question--answer pairs, rewriting them into fuzzy questions, grounding anchor entities with representative images, and applying automated quality control.
    (b) We then perform staged filtering to retain only tool-demanding, non-trivial samples, and create an enhanced subset through image degradation and tool-based restoration to encourage think-with-image behavior.
    (c) Finally, we synthesize multi-turn expert trajectories in a real tool environment and apply rejection sampling with answer-correctness and process-level judges, yielding the final high-quality trajectories used for the following stage training.}
  \label{fig:data-pipeline}
\vspace{-15pt}

\end{figure}

\subsection{High-Quality VQA Construction}
\label{sec:vqa-construction}

A central challenge for training multimodal search agents is the supply of questions that encourage non-trivial use of the diverse tool set~$\mathcal{T}$. Directly prompting a VLM on an image tends to yield shallow, perception-level queries that can be resolved in a single forward pass~\citep{webwatcher,huang2026vision}. 
Building on this observation, we adopt a unified construction pipeline: we sample multi-hop trajectories over the Wikipedia hyperlink graph, synthesize textual QA pairs along each trajectory, and lift them into image-grounded VQA via answer-preserving fuzzy rewriting and source-anchored visual grounding. Compared with prior QA constructions~\citep{wu2025webdancer,li2025websailor,webwatcher}, our pipeline (i) assigns each node on the sampled path an explicit functional role within the reasoning chain, and (ii) deliberately decouples the visual anchor from the answer entity, thereby suppressing single-shot retrieval shortcuts.

\textbf{Wikipedia Path Sampling.}
We cast the Wikipedia~\citep{wikipedia} as a directed graph $\mathcal{G}=(\mathcal{V},\mathcal{E})$ with articles as nodes and in-article hyperlinks as edges. From a seed $v_0\in\mathcal{V}$, a constrained random walk of length $h\in\{2,3,4\}$ produces a path
\begin{equation}
  P \;=\; \bigl(v_0 \xrightarrow{\rho_1} v_1 \xrightarrow{\rho_2} \cdots \xrightarrow{\rho_h} v_h\bigr),
  \label{eq:wiki-path}
\end{equation}
where each relation $\rho_j$ is induced by the hyperlink's anchor text. The walk skips (i) disambiguation and list pages, (ii) cycles, and (iii) hub nodes whose in-degree exceeds a threshold $\tau_\text{hub}$; full thresholds, resampling heuristics, and additional filters are deferred to Appendix~\ref{apdx:vqa-running-example}. Each node on $P$ is assigned a functional role: $v_0$ is the \textbf{anchor} (visual entry point, to be replaced by a visual referring expression), $v_1,\dots,v_{h-1}$ are \textbf{bridge} nodes (intermediate entities with fuzzified names), and $v_h$ is the \textbf{answer} node (source of the target attribute). These roles govern the rewriting and grounding stages below.

We extract a short, unambiguous answer $a$ from $v_h$ and prompt GPT-4o~\citep{openai2024gpt4ocard} to synthesize a \emph{canonical} question $q_t$ that verbalizes $P$ and references $v_h$ only through the queried attribute (extraction details in Appendix~\ref{apdx:vqa-running-example}). The canonical $q_t$ is not a training target but a manipulable object for rewriting.

\textbf{Fuzzy Entity Rewriting.}
Preserving entity names in $q_t$ enables the agent to short-circuit the chain with a single retrieval~\citep{li2025websailor,huang2026vision}. We therefore progressively rewrite $q_t$ into a fuzzy counterpart $q_f$ while fixing $a$. Following the iterative style of Skywork-R1V4~\citep{zhang2025skywork}, we rewrite \emph{one entity at a time}, from the farthest bridge $v_{h-1}$ toward $v_0$: each name is replaced by a relational or attribute-based descriptor drawn from the entity's Wikipedia context, and an LLM uniqueness evaluator verifies that the substitution still resolves to the intended entity conditional on the partially rewritten question. A rewrite is accepted only when
\begin{equation}
\underbrace{a(q_f)=a(q_t)}_{\text{answer invariance}},\qquad
\underbrace{\lvert\mathcal{R}(q_f)\rvert=1}_{\text{uniqueness}},\qquad
\underbrace{\Bigl(\textstyle\bigcup_{j=0}^{h}\mathrm{aliases}(v_j)\Bigr)\cap q_f = \emptyset}_{\text{non-leakage}},
\label{eq:fuzz-invariants}
\end{equation}
where $\mathcal{R}(q_f)$ denotes the set of entities compatible with $q_f$ under the evaluator's world knowledge. We further interleave entity rewriting with occasional \emph{answer obfuscation}~\citep{huang2026vision} to avoid collapsing onto a stereotyped relational template.

\textbf{Anchor-aware Visual Grounding.}
We retrieve a representative image $I$ of the anchor $v_0$ from Wikimedia Commons or its Wikipedia infobox, filter candidates by CLIP similarity to a short textual description of $v_0$, and replace $v_0$ in $q_f$ with a visual referring expression (e.g., \emph{``the person in the image''}) to yield the final question $q$. Unlike prior QA-to-VQA conversions~\citep{webwatcher,zhang2025skywork} that ground on or near the answer entity, anchoring $v_0$ at the \emph{source} of $P$ substantially reduces single-hop shortcuts: the agent must first identify the visual anchor and then follow the intermediate textual relations before reaching $a$.

Each candidate triple $(I,q,a)$ is gated by automatic checks for masking, uniqueness, and visual relevance, generalizing the selector/examiner protocol of WebWatcher~\citep{webwatcher} (full criteria in Appendix~\ref{apdx:vqa-running-example}); non-triviality is handled jointly with the staged filtering of Sec.~\ref{sec:filtering-enhancement}. Instances passing these checks form the Wikipedia portion of our VQA pool, subsequently merged with open-source multimodal corpora before trajectory synthesis.

\subsection{Filtering and Enhancement}
\label{sec:filtering-enhancement}

Before trajectory synthesis, we consolidate the Wikipedia-derived VQA instances from Sec.~\ref{sec:vqa-construction} with three open-source multimodal corpora---LiveVQA~\citep{fu2025livevqa}, FVQA~\citep{wang2017fvqa}, and WebQA~\citep{chang2022webqamultihopmultimodalqa}---to broaden coverage across live entities, commonsense fact lookup, and open-web multi-hop reasoning. We then apply a two-stage difficulty filter using a frozen \texttt{Qwen3-VL-32B}~\citep{Qwen3-VL}: first discarding examples answerable without tools, and then discarding examples solvable with a single \textsc{ImageSearch} call. This removes samples that rely only on parametric knowledge, perceptual shortcuts, answer-coincident anchors, or one-hop bridge leakage, ensuring that retained instances genuinely require the intended visual-to-text search chain.

To further expose the agent to realistic visual imperfections, we randomly select $10\%$ of the filtered VQA pool and apply controlled degradations---blur, downsampling, and perspective distortion---paired with the corresponding enhancement tools in $\mathcal{T}_v$ (\textsc{Sharpen}, \textsc{SuperResolution}, and \textsc{PerspectiveCorrect}). This enhancement subset diversifies the training distribution and induces a \emph{think-with-image} behavior: when the input image is unreliable, the policy learns to repair the visual evidence before initiating retrieval. Together, the filtered retrieval-heavy instances and the enhancement-required subset exercise both visual restoration and evidence acquisition within the unified tool environment.

\subsection{Multi-turn Trajectory Synthesis}
\label{sec:trajectory-synthesis}

For each instance $(I, q, a)$ that survives the filters of Sec.~\ref{sec:filtering-enhancement}, we synthesize expert trajectories by rolling out \texttt{Claude Opus 4.6}~\citep{Claude_46} as the expert model against the real execution environment $\mathcal{E}$, prompted with the agent system prompt of Appendix~\ref{apdx:system-prompts} and free to invoke any tool in $\mathcal{T}$. We draw $K=5$ independent rollouts per instance, each formatted as a multi-turn ReAct~\citep{yao2022react} trajectory aligned with Eq.~\ref{eq:trajectory}. Then the raw rollouts are passed through a two-stage rejection cascade. The first stage discards any trajectory whose final answer disagrees with the ground truth $a$ (adjudicated by the same \texttt{GPT-4o}~\citep{openai2024gpt4ocard} LLM-as-judge~\citep{gu2025surveyllmasajudge} we use for $r_{\text{acc}}$, Sec.~\ref{sec:multi-turn-grpo}). The surviving trajectories are then vetted by a \texttt{GPT-5.4} process-level judge on tool-use, logical consistency between reasoning and observations, and absence of ineffective repetition, sharing the four-dimension rubric of $r_{\text{query}}$ (Sec.~\ref{sec:multi-turn-grpo}).

Applying both stages to the full rollout corpus yields $\mathbf{36{,}592}$ high-quality expert trajectories with an average of $\mathbf{6.3}$ tool-invocation turns per trajectory, which together constitute the SFT corpus consumed in Sec.~\ref{sec:sft}.

\section{Training}
\label{sec:training}

\begin{figure}[t]
  \centering
  \includegraphics[width=\linewidth]{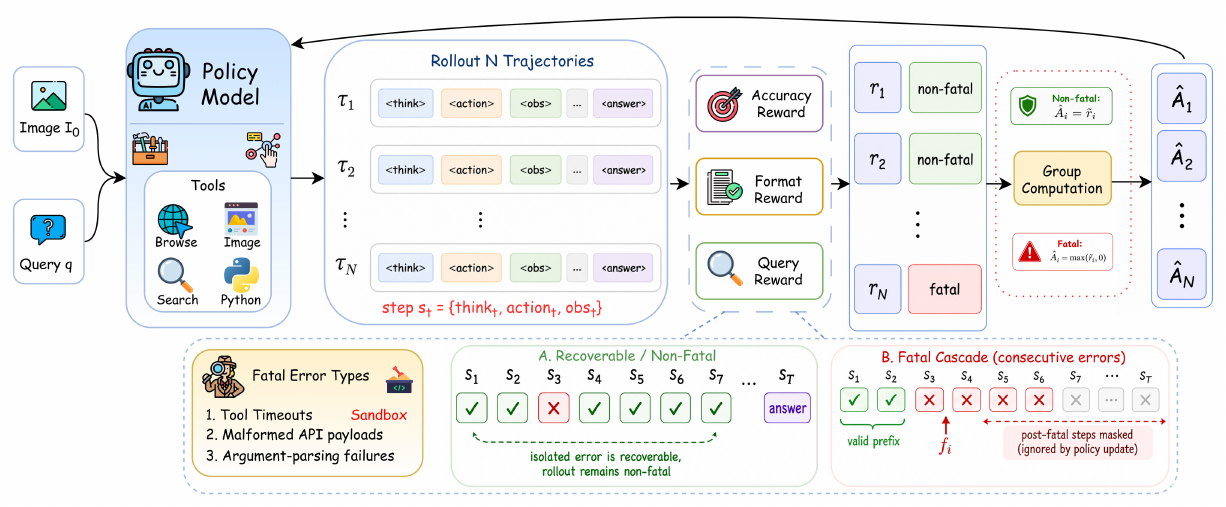}
  \vspace{-0.5cm}
  \caption{\textbf{Overview of the RL training pipeline.}
  Starting from a supervised fine-tuned model, we sample a group of multi-turn trajectories against the real environment $\mathcal{E}$. Each trajectory is evaluated by a composite reward combining final-task success ($r_{\text{acc}}$) and process-level search quality ($r_{\text{query}}$) along with a format check ($r_{\text{fmt}}$). To preserve valid reasoning in trajectories that eventually encounter fatal errors, we apply fatal-aware token masking to truncate the sequence and employ one-sided advantage clamping during policy optimization, preventing the suppression of viable early steps.}
  \label{fig:rl-pipeline}
  \vspace{-0pt}
\end{figure}

We train \textsc{OpenSearch-VL} in two sequential stages. First, we perform supervised fine-tuning (SFT) to instill fundamental reasoning and tool-use behaviors; subsequently, we apply reinforcement learning (RL) via a multi-turn, search-augmented objective (Figure~\ref{fig:rl-pipeline}) to discover more effective exploration strategies.

\subsection{Supervised Fine-Tuning}
\label{sec:sft}

We perform SFT on a curated set of $36592$ multi-turn
expert trajectories (Section~\ref{sec:data}).
Using the history $h_l$ (Eq.~\ref{eq:history}) and the action
decomposition $a_l=[z_l,\,c_l]$, by autoregressive factorisation the
step-level action probability decomposes as
\begin{equation}
  P_\theta(a_l \mid h_l)
  \;=\;
  P_\theta(z_l \mid h_l)\;
  P_\theta(c_l \mid h_l,\, z_l).
\end{equation}
Summing over all trajectories $i\in\{1,\dots,N\}$ and steps
$l\in\{1,\dots,L_i\}$, the standard SFT objective can be equivalently
written as
\begin{equation}
  \max_\theta \sum_{i=1}^{N} \sum_{l=1}^{L_i}
  \left[
    \log P_\theta\left(z_l^{(i)} \mid h_l^{(i)}\right)
    +
    \log P_\theta\left(c_l^{(i)} \mid h_l^{(i)},\, z_l^{(i)}\right)
  \right],
  \label{eq:sft}
\end{equation}
where tool observations $o_l$ enter only as conditioning context and are
excluded from the loss computation following the retrieved-token masking
strategy of \citep{jin2025searchr1trainingllmsreason}.
This provides a structured interpretation of the training signal,
showing that it jointly supervises both the reasoning trace and the
subsequent tool invocation (or terminal response) at each step.

\subsection{Multi-Turn Search Fatal-Aware GRPO}
\label{sec:multi-turn-grpo}

While SFT provides a strong initialization for tool use, it remains bounded by the coverage of the demonstration trajectories and therefore cannot discover improved search strategies through exploration. We address this limitation with reinforcement learning, building on GRPO~\citep{shao2024deepseekmath} and its search-augmented extension~\citep{jin2025searchr1trainingllmsreason}. Our setting, however, differs from prior search-only formulations in three important respects: we optimize over a multimodal environment  $\mathcal{E}$ with diverse tools rather than a text-only retriever $\mathcal{R}$; we use a composite reward that combines final-task success with process-level search quality; and we introduce fatal-aware masking together with one-sided advantage clamping to preserve useful supervision from partially successful trajectories.

During training, for each prompt $(I_0, q)$ we sample a group of $G$ multi-turn rollouts $\tau_i \sim \pi_{\theta_{\text{old}}}(\cdot \mid I_0, q) \otimes \mathcal{E}$, for $i=1,\dots,G$.

\textbf{Composite Multi-Turn Reward.}
Long-horizon tasks pose a sparse-reward challenge: outcome-only rewards miss credit for partially successful reasoning, while process-only rewards risk misalignment from the end goal. We therefore use a composite trajectory-level reward
\begin{equation}
  r(\tau) \;=\; r_{\text{fmt}}(\tau) \;\cdot\; \bigl[\,\alpha\, r_{\text{acc}}(\tau) + (1{-}\alpha)\, r_{\text{query}}(\tau)\,\bigr],
  \label{eq:reward}
\end{equation}
where $\alpha=0.8$. The composite trajectory-level reward (Eq.~\ref{eq:reward}) is structured to balance algorithmic formatting, terminal accuracy, and process-level search quality. We define each component as follows:

\begin{itemize}[leftmargin=1.4em, itemsep=4pt, topsep=2pt, parsep=0pt]
  \item \emph{Format reward $r_{\text{fmt}} \in [0,1]$.} A deterministic, algorithmic prior that enforces structural integrity. We define $r_{\text{fmt}}(\tau) = \frac{1}{L+1} \sum_{l=0}^{L} r_{\text{fmt}}^{(l)}$, where $r_{\text{fmt}}^{(l)} = 1$ iff step $l$ emits a contiguous \texttt{<think>}$\,\cdots\,$\texttt{</think>} block immediately followed by either a \texttt{<tool\_call>}$\,\cdots\,$\texttt{</tool\_call>} block (for $l < L$) or a \texttt{<response>}$\,\cdots\,$\texttt{</response>} block (for $l = L$); $r_{\text{fmt}}^{(l)} = 0$ for any structural violation, including steps that trigger tool-execution errors (e.g., malformed API arguments). By acting as a multiplicative gate in Eq.~\ref{eq:reward}, $r_{\text{fmt}}$ drives the overall return of structurally degraded trajectories toward zero.

  \item \emph{Accuracy reward $r_{\text{acc}} \in \{0,1\}$.} A terminal outcome metric assessing the fidelity of the agent's final resolution. A \texttt{GPT-4o}~\citep{openai2024gpt4ocard} judge under an LLM-as-Judge protocol~\citep{gu2025surveyllmasajudge} verifies semantic equivalence between the agent's terminal \texttt{<response>} and the ground-truth annotation, assigning $r_{\text{acc}}=1$ for a match and $0$ otherwise. \emph{Convention for truncated trajectories:} for trajectories aborted by the fatal-state condition (Eq.~\ref{eq:fatal-mask}) before emitting a terminal \texttt{<response>}, we deterministically assign $r_{\text{acc}}=0$. This is a structural guarantee rather than an evaluative judgment---in the absence of a terminal answer, correctness is strictly undefined and conservatively zeroed out---and it ensures the outcome signal remains well-defined for group-relative advantage estimation regardless of completion status.

  \item \emph{Query-quality reward $r_{\text{query}} \in [0,1]$.} A process-level signal that counteracts the inherent sparsity of $r_{\text{acc}}$ in long-horizon interactions. We use \texttt{GPT-5.4}~\citep{GPT-5}, a proprietary frontier reasoning model, as the query-quality judge to score the cumulative sequence of search queries on a continuous $[0,1]$ scale, providing dense feedback for unsuccessful trajectories with $r_{\text{acc}}=0$. The rubric covers four dimensions: (i) semantic relevance of issued queries to the initial prompt; (ii) logical progression and iterative refinement of queries across successive turns; (iii) signal-to-noise ratio within retrieved payloads; and (iv) cross-modal complementary use of image and text retrieval tools. For trajectories designated as fatal, the judge restricts its evaluation to the valid pre-fatal prefix (steps $l < f_i$), so that early-stage reasoning is credited despite subsequent collapse.
\end{itemize}

\textbf{Fatal-Aware Token Masking.}
In unconstrained multi-turn environments, agents frequently encounter ``fatal'' states---such as cascading tool-execution failures or infinite loops---after which subsequent reasoning becomes meaningless. Standard approaches either discard the entire trajectory~\citep{huang2026vision} (wasting the valid early steps) or train on it blindly (injecting noise). We introduce a \emph{fatal-aware} masking strategy to preserve the viable prefix. 

We define the \emph{fatal step index} $f_i$ for trajectory $\tau_i$ as the earliest step where $K=3$ consecutive tool-execution errors commence, with $f_i = L_i+1$ if no such cascade occurs. We then extend the observation-token (generation) mask $M_{\text{gen}}(y_{i,t})$ to additionally zero out all tokens generated \emph{after} the fatal step:
{\small
\begin{equation}
  M(y_{i,t}) \;=\; M_{\text{gen}}(y_{i,t}) \;\cdot\; \mathbb{1}\bigl[\,s(t) < f_i\,\bigr],
  \label{eq:fatal-mask}
\end{equation}
}where $s(t)$ maps token index $t$ to its step index $l$.
Crucially, the process rewards $r_{\text{fmt}}$ and $r_{\text{query}}$ are computed exclusively over the valid prefix $l < f_i$, ensuring the model is not penalised for structural collapse that occurs after the trajectory is deemed fatal.

All $G$ trajectories---including fatal ones---contribute to the standard group-normalised reward $\widetilde{r}_i = (r(\tau_i) - \mathrm{mean})/(\mathrm{std} + \delta)$ to keep group statistics unbiased. However, directly using $\widetilde{r}_i$ for fatal trajectories is pathological: a sub-mean $\widetilde{r}_i$ would push the policy gradient to suppress the \emph{viable prefix}, discouraging the valid reasoning before the error cascade. We therefore apply one-sided advantage clamping:
{\small
\begin{equation}
  \hat{A}_i \;=\;
  \begin{cases}
    \widetilde{r}_i & \text{if } f_i = L_i+1 \text{ (non-fatal)},\\[3pt]
    \max(\widetilde{r}_i,\; 0) & \text{if } f_i \le L_i \text{ (fatal)}.
  \end{cases}
  \label{eq:advantage}
\end{equation}
}This clamping ensures that the valid prefix of a fatal trajectory is only ever \emph{reinforced} if its partial reward exceeds the group mean, and otherwise receives zero gradient rather than an undeserved penalty. In this sense, it generalizes the hard-masking baseline~\citep{huang2026vision} while recovering strictly more useful learning signal.

Integrating the fatal-aware mask $M_{i,t} \equiv M(y_{i,t})$ and the clamped advantage $\hat{A}_i$, our final GRPO objective over the multimodal environment $\mathcal{E}$ is formulated as:

\begin{equation}
\mathcal{J}(\theta)=\mathbb{E}_{\begin{subarray}{l}
({I_0,}q)\sim\mathcal{D} \\
\{{\tau_i}\}_{i=1}^{G}\sim\pi_{\theta_{\text{old}}}(\cdot\mid{I_0,}q{\color{promptamber};\,\mathcal{E}})
\end{subarray}}\!\!
\left[
\frac{1}{G}\sum_{i=1}^{G}{\color{promptblue}\frac{1}{\sum_t M_{i,t}}}\sum_{t=1}^{{|\tau_i|}}{\color{promptblue}M_{i,t}}
\min\!\left(
\rho_{i,t}(\theta){\color{prompttea}\hat{A}_{i}},
\mathrm{clip}_{1-\epsilon}^{1+\epsilon}\!\left(\rho_{i,t}(\theta)\right){\color{prompttea}\hat{A}_{i}}
\right)
\right].
\label{eq:our-grpo}
\end{equation}

\noindent
where $\rho_{i,t}(\theta) = \pi_\theta(y_{i,t} \mid I_0,q,y_{i,<t};\mathcal{E}) \,/\, \pi_{\theta_{\text{old}}}(y_{i,t} \mid I_0,q,y_{i,<t};\mathcal{E})$ is the token-level importance ratio, and the standard $\beta\,D_{\mathrm{KL}}[\pi_\theta\|\pi_{\mathrm{ref}}]$ term is omitted from the display since it is identical to that of standard GRPO (Eq.~\ref{eq:grpo-objective}). Relative to search-augmented GRPO~\citep{jin2025searchr1trainingllmsreason}, the differences (color-matched to Eq.~\ref{eq:our-grpo}) are threefold: \textcolor{promptamber}{the execution environment is generalized from $\mathcal{R}$ to $\mathcal{E}$}, \textcolor{promptblue}{the generation mask $M_{\text{gen}}$ is extended to the fatal-aware mask $M$}, and \textcolor{prompttea}{the advantage is computed from the composite reward with one-sided clamping}. Additional details and derivations are provided in Appendix~\ref{apdx:grpo-detail}.
\section{Experiments}
\label{sec:experiments}

\setlength{\textfloatsep}{2pt plus 1pt minus 1pt}
\setlength{\floatsep}{2pt plus 1pt minus 1pt}
\setlength{\intextsep}{2pt plus 1pt minus 1pt}
\setlength{\dbltextfloatsep}{2pt plus 1pt minus 1pt}
\setlength{\dblfloatsep}{2pt plus 1pt minus 1pt}

We first describe our experimental setups as follows:

\textbf{Models and Benchmarks.}
Our \textsc{OpenSearch-VL} is built on three Qwen3-VL variants~\citep{Qwen3-VL}: \textit{Qwen3-VL-8B-Instruct}, \textit{Qwen3-VL-30B-A3B-Instruct}, and \textit{Qwen3-VL-32B-Instruct}. For evaluation, we use seven knowledge-intensive benchmarks from our main results:  SimpleVQA~\citep{cheng2025simplevqa}, VDR~\citep{vdr}, MMSearch~\citep{mmsearch}, LiveVQA~\citep{fu2025livevqa}, BrowseComp-VL~\citep{webwatcher}, FVQA~\citep{wang2017fvqa}, and InfoSeek~\citep{chen2023can}. Together, they cover visual entity recognition, web evidence retrieval, multi-hop reasoning, and long-tail QA.

\textbf{Baselines and Evaluation Metrics.}
We evaluate \textsc{OpenSearch-VL} against baseline types: Direct Reasoning, where the model answers from its parametric knowledge and visual perception alone; RAG Workflow, where external retrieval results are provided in-context but the reasoning remains single-pass; and Agentic Workflow, where the model autonomously interleaves reasoning with tool calls in a multimodal environment. We report Pass@1 on all seven benchmarks. Correctness is adjudicated by a \textit{GPT-4o} judge that compares the model's final response with the reference answer. To ensure fair comparison across heterogeneous answer styles, we adopt the same evaluation protocol as VDR-Bench~\citep{vdr}; the full judge prompt is provided in Figure~\ref{apdxfig:bench-judge}.

\textbf{Datasets.}
Our SFT data consists of the \textbf{36K} multi-turn trajectories synthesized by the procedure in Sec.~\ref{sec:trajectory-synthesis}. For RL training, we randomly sample \textbf{8K} examples from the VQA pool after the staged filtering and enhancement process in Sec.~\ref{sec:filtering-enhancement}, ensuring that these examples are disjoint from the VQA instances used to synthesize the SFT trajectories. 

\textbf{Implementation Details.}
\textsc{OpenSearch-VL} extends \textsc{LlamaFactory}~\citep{zheng2024llamafactory} for agentic SFT and builds on \textsc{rLLM}~\citep{rllm2025} and \textsc{VDR}~\citep{huang2026vision} for multi-turn tool-interleaved RL. All stages of \textsc{OpenSearch-VL} are trained on Nvidia H20 GPUs. Agentic SFT takes roughly 2 days for the 8B dense model and 4 days for the 30B-A3B MoE model on 256 H20s (32 nodes $\times$ 8 GPUs); the subsequent multi-turn fatal-aware GRPO stage runs for approximately 200 optimization steps over 10 days on 64 H20s (8 nodes $\times$ 8 GPUs). The complete per-stage hyperparameter configurations are listed in the following subsections (SFT in Table~\ref{tab:sft-config}; RL in Table~\ref{tab:rl-config}). More details are reported in Appendix~\ref{apdx:implementation-details}.

\subsection{Main Results}

\begin{table}[!t]
    \setlength{\abovecaptionskip}{2pt}
    \setlength{\belowcaptionskip}{2pt}
    \caption{
    Performance on multimodal knowledge-intensive QA and web-search benchmarks.
    \textbf{Bold} and \underline{underline} mark the best and second-best score in each column.
    }
    \centering
    \footnotesize
    \setlength{\tabcolsep}{4.5pt}
    \resizebox{\textwidth}{!}{
    \begin{tabular}{l ccccccc c}
    \toprule
    \textbf{Model} & SimpleVQA & VDR & MMSearch & LiveVQA & BrowseComp-VL & FVQA & InfoSeek & \textbf{Avg} \\
    \midrule
    \rowcolor{blue!5}
    \multicolumn{9}{c}{\textbf{Direct Reasoning}} \\
    GPT-4o ~\citep{openai2024gpt4ocard}           & 51.7 & 1.7  & 18.7 & 28.1 & 5.5  & 48.0 & 52.9 & 29.5 \\
    GPT-5 ~\citep{GPT-5}            & \underline{61.6} & \textbf{9.8}  & \underline{35.1} & 44.4 & \textbf{48.6} & \underline{54.4} & \textbf{61.7} & \underline{45.1} \\
    Gemini-2.5-Flash~\citep{comanici2025gemini}  & 57.9 & 6.2  & 30.4 & \underline{51.0} & 37.1 & 47.7 & 44.1   & 39.2 \\
    Gemini-2.5-Pro~\citep{comanici2025gemini}    & \textbf{63.0} & \underline{8.0}  & \textbf{39.8} & \textbf{60.3} & \underline{43.1} & \textbf{60.7} & 46.9   & \textbf{46.0} \\
    Claude-4-Sonnet~\citep{Claude_4}   & 50.9 & 2.0  & 18.7 & 38.5 & 29.3 & 35.3 & \underline{57.3}   & 33.1 \\
    Claude-3.7-Sonnet~\citep{Claude_37} & 42.7 & 4.6  & 21.1 & 38.0 & 32.3 & 36.7 &  54.8   & 32.9 \\
    Qwen3-VL-8B ~\citep{Qwen3-VL} & 47.1 & 2.8  & 11.7 & 23.1 & 24.1 & 24.2 & 23.1 & 22.3 \\
    Qwen3-VL-30B-A3B ~\citep{Qwen3-VL} & 53.2   & 3.8  & 18.7 & 42.7 & 29.6 & 34.7 & 26.4 & 29.9 \\
    Qwen3-VL-32B ~\citep{Qwen3-VL} & 58.0 & 4.1   & 19.8 & 45.5 & 30.8 & 34.1 & 28.8 & 31.6 \\
    \midrule
    \rowcolor{blue!5}
    \multicolumn{9}{c}{\textbf{RAG Workflow}} \\
    GPT-4o ~\citep{openai2024gpt4ocard} & \textbf{63.6} & 4.5  & \underline{49.1} & \underline{40.1} & 13.4 & \textbf{66.3} & 59.5 & \underline{42.4} \\
    GPT-5  ~\citep{GPT-5} & 55.9 & \textbf{22.3}   & \textbf{52.6} & \textbf{56.0} & \textbf{54.9} & \underline{62.6} & \textbf{70.6} & \textbf{53.6} \\
    Claude-3.7-Sonnet~\citep{Claude_37} & 59.3 & \underline{11.3}   & 32.7 & 30.3 & 10.0 & 59.1   & \underline{60.2}   & 37.6 \\
    Qwen3-VL-8B ~\citep{Qwen3-VL}       & \underline{62.3} & 7.3   & 47.3 & 39.3 & \underline{29.3} & 53.6 & 46.1 & 40.7 \\
    \midrule
    \rowcolor{blue!5}
    \multicolumn{9}{c}{\textbf{Agentic Workflow}} \\
    DeepMMSearch-R1-7B ~\citep{deepmmsearch-r1} & 55.8 & --   & --   & --   & --   & --   & 47.5 & -- \\
    Visual-ARFT-7B ~\citep{liu2025visualarft}                   & 42.4 & 3.3   & 34.5 & 25.4 & 16.5 & 41.7 & 37.9 & 28.8 \\
    MMSearch-R1-7B\citep{mmsearch-r1} & 57.4 & 2.9   & 53.8 & 48.4 & 20.9 & 58.4 & 55.1 & 42.4 \\
    DeepEyes-v2-7B~\citep{deepeyesv2} & 59.4 & 7.8   & 63.7 & --   & --   & 60.6 & 51.1 & -- \\
    WebWatcher-7B~\citep{webwatcher}  & 54.3 & 10.3   & 49.1 & 51.2 & 21.2 & --   & --   & -- \\
    Qwen3-VL-8B~\citep{Qwen3-VL}  & 52.0 & 17.0 & 37.4 & 50.6 & 27.9 & 58.7 & 50.3 & 42.0 \\
    SenseNova-MARS-8B~\citep{chng2025sensenova} & \underline{61.7} & \underline{19.4}   & \textbf{67.4} & \underline{56.2} & \underline{35.1} & \underline{67.1} & \underline{61.7} & \underline{52.7} \\
    \rowcolor{purple!15}
    \textbf{OpenSearch-VL-8B(Ours)}         & \textbf{71.6}   & \textbf{20.8}   & \underline{64.5}   & \textbf{59.6}   & \textbf{37.6}   & \textbf{71.5}   & \textbf{70.2}   & \textbf{56.6} \\
    \midrule
    Qwen3-VL-30B-A3B~\citep{Qwen3-VL} & \underline{55.1}   & \underline{20.2} & \underline{44.2} & \underline{62.0} & \underline{34.1} & \underline{63.0} & \underline{56.2}   & \underline{47.8} \\
    \rowcolor{purple!15}
    \textbf{OpenSearch-VL-30B-A3B(Ours)}    & \textbf{74.9}   & \textbf{33.5}   & \textbf{68.7}   & \textbf{67.4}   & \textbf{41.1}   & \textbf{73.2}   & \textbf{72.4}   & \textbf{61.6} \\
    \midrule
    Qwen3-VL-32B~\citep{Qwen3-VL} & 58.7 & \underline{23.1}  & 53.9 & 45.5 & \underline{35.1} & \underline{61.2} & \underline{58.5} & \underline{48.0} \\
    WebWatcher-32B~\cite{webwatcher} & \underline{59.0} & --   & \underline{55.3} & \underline{58.7} & 27.0 & --   & --   & -- \\
    \rowcolor{purple!15}
    \textbf{OpenSearch-VL-32B(Ours)}        &  \textbf{76.2}  & \textbf{33.8}   & \textbf{72.3}   & \textbf{70.5}   & \textbf{43.8}   & \textbf{74.7}   & \textbf{74.8}   & \textbf{63.7} \\
    \bottomrule
    \end{tabular}
    }
    \label{tab:search-qa-main}
    \vspace{0.6cm}
\end{table}

Table~\ref{tab:search-qa-main} reports the results on seven multimodal knowledge-intensive QA and web-search benchmarks. \textsc{OpenSearch-VL} exhibits a clear advantage over both direct-reasoning and RAG baselines across all scales, underscoring the necessity of an agentic loop for complex multimodal queries. Among 8B-scale agents, \textit{OpenSearch-VL-8B} achieves the best average score of \textbf{56.6}, surpassing the previous strongest open 8B agent, \textit{SenseNova-MARS-8B}, by \textbf{3.9} points on average. At larger scales, \textit{OpenSearch-VL-30B-A3B} and \textit{OpenSearch-VL-32B} further improve the average score to \textbf{61.6} and \textbf{63.7}, respectively; notably, \textit{OpenSearch-VL-32B} outperforms strong proprietary direct-reasoning models such as \textit{Gemini-2.5-Pro} and substantially exceeds the corresponding \textit{Qwen3-VL} agentic baselines. These results demonstrate that our training recipe scales effectively from 8B to 32B and yields strong gains on both search-heavy and visually grounded benchmarks.

\subsection{Ablation Study}

We conduct ablations on the Qwen3-VL-8B model to validate the two design choices that define \textsc{OpenSearch-VL}: the data synthesis pipeline that produces tool-demanding multimodal trajectories, and the fatal-aware RL recipe that improves the policy beyond offline imitation.

\providecommand{\dup}[1]{\textcolor{red!75!black}{(+#1)}}
\providecommand{\ddn}[1]{\textcolor{green!55!black}{(-#1)}}

\begin{table}[H]
    \setlength{\abovecaptionskip}{2pt}
    \setlength{\belowcaptionskip}{3pt}
    \caption{\textbf{Ablation studies on the SFT data pipeline and RL training recipe.} Deltas in the top panel are measured relative to the full pipeline; deltas in the bottom panel are measured relative to Vanilla GRPO~\citep{jin2025searchr1trainingllmsreason}.}
    \label{tab:data-pipeline-ablation}
    \label{tab:training-recipe-ablation}
    \centering
    \scriptsize
    \setlength{\tabcolsep}{4.5pt}
    \renewcommand{\arraystretch}{0.92}
    \textbf{(a) SFT data pipeline ablation}\\[-0.2em]
    \begin{tabularx}{0.85\linewidth}{l CCCC}
    \toprule
    \textbf{Settings} & SimpleVQA & InfoSeek & FVQA & \textbf{Avg.} \\
    \midrule
    \rowcolor{purple!15}
    Full pipeline \textbf{(Ours)}      & \textbf{66.1}            & \textbf{62.4}            & \textbf{65.3}             & \textbf{64.6} \\
    \quad w/o source-anchor grounding  & 53.6\ddn{12.5}           & 54.5\ddn{7.9}            & 51.2\ddn{14.1}            & 53.1\ddn{11.5} \\
    \quad w/o fuzzy entity rewriting   & 51.7\ddn{14.4}           & 56.4\ddn{6.0}            & 54.7\ddn{10.6}            & 54.3\ddn{10.3} \\
    \quad w/o staged filtering         & 57.6\ddn{8.5}            & 55.2\ddn{7.2}            & 56.3\ddn{9.0}             & 56.4\ddn{8.2}  \\
    \quad w/o enhancement subset       & 64.9\ddn{1.2}            & 61.7\ddn{0.7}            & 63.2\ddn{2.1}             & 63.3\ddn{1.3}  \\
    \bottomrule
    \end{tabularx}

    \vspace{0.35em}
    \textbf{(b) RL recipe ablation}\\[-0.2em]
    \begin{tabularx}{0.85\linewidth}{l CCCC}
    \toprule
    \textbf{Method} & SimpleVQA & InfoSeek & FVQA & \textbf{Avg.} \\
    \midrule
    Qwen3-VL-8B                                             & 52.0 & 50.3 & 58.7 & 53.7 \\
    \midrule
    Qwen3-VL-8B + SFT only                                  & 66.1 & 62.4 & 65.3 & 64.6 \\
    \quad + Vanilla GRPO                                    & 68.8 & 66.5 & 67.4 & 67.6 \\
    \quad + GRPO w/ Hard Masking                            & 68.3\ddn{0.5} & 67.9\dup{1.4} & 66.9\ddn{0.5} & 67.7\dup{0.1} \\
    \quad + GRPO w/ Fatal Masking only                      & 69.7\dup{0.9} & 68.3\dup{1.8} & 69.2\dup{1.8} & 69.1\dup{1.5} \\
    \rowcolor{purple!15}
    \quad + Fatal Masking + One-sided Clamp \textbf{(Ours)} & \textbf{71.6}\dup{2.8} & \textbf{72.4}\dup{5.9} & \textbf{71.5}\dup{4.1} & \textbf{71.8}\dup{4.2} \\
    \bottomrule
    \end{tabularx}
\end{table}

\textbf{Data Pipeline Ablation.} 
Table~\ref{tab:data-pipeline-ablation} shows that each stage of our data synthesis pipeline contributes to the final performance. The full pipeline achieves the best average score of \textbf{64.6}, while removing source-anchor grounding, fuzzy entity rewriting, or staged filtering leads to large drops of 11.5, 10.3, and 8.2 points, respectively. These results indicate that effective training data must both prevent shortcut retrieval and preserve genuinely tool-demanding queries. Removing the enhancement subset causes a smaller but consistent decline ($64.6\rightarrow63.3$ Avg.), suggesting that image-restoration trajectories mainly improve robustness rather than driving the core gains.

\textbf{Training Recipe Ablation.}
Table~\ref{tab:training-recipe-ablation} studies the effect of our RL recipe after the same Qwen3-VL-8B SFT initialization, using Search-R1-style vanilla GRPO~\citep{jin2025searchr1trainingllmsreason} as the RL baseline. SFT improves the base model from 53.7 to 64.6 average accuracy, and vanilla GRPO further raises it to 67.6, showing the benefit of online exploration. The way fatal trajectories are handled is crucial: the hard-masking strategy of Vision-DeepResearch~\citep{huang2026vision} brings almost no gain over vanilla GRPO~\citep{jin2025searchr1trainingllmsreason, guo2025deepseek}($67.6\rightarrow67.7$), while fatal masking improves the average to 69.1 by preserving valid pre-failure reasoning. Our full method with one-sided advantage clamping achieves the best score on every benchmark and reaches \textbf{71.8} average accuracy, a 4.2-point gain over vanilla GRPO. The training curves in Fig.~\ref{fig:turn-acc-curves} are consistent with this result: fatal-aware GRPO sustains longer tool-use trajectories while achieving higher batch accuracy, indicating that it encourages productive exploration rather than prematurely suppressing difficult rollouts.

\begin{figure}[H]
    \centering
    \includegraphics[width=0.84\linewidth]{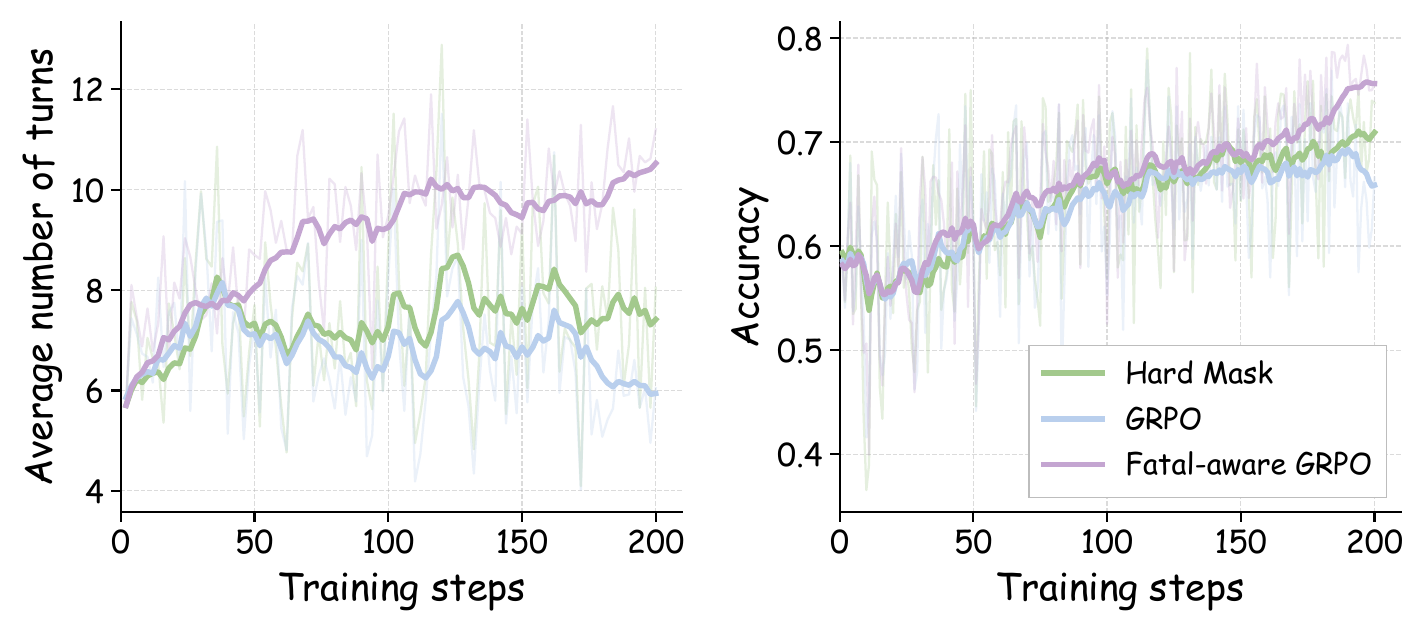}
    \caption{Training dynamics over the RL phase. \textbf{Left:} average
    number of turns per rollout. \textbf{Right:} batch-level accuracy.
    Fatal-aware GRPO sustains a higher number of turns \emph{and} reaches a
    higher accuracy than vanilla GRPO~\citep{jin2025searchr1trainingllmsreason, guo2025deepseek} and the Hard-Mask~\citep{huang2026vision} baseline.}
    \label{fig:turn-acc-curves}
\end{figure}

\begin{figure}[!t]
  \centering
  \includegraphics[width=0.92\linewidth]{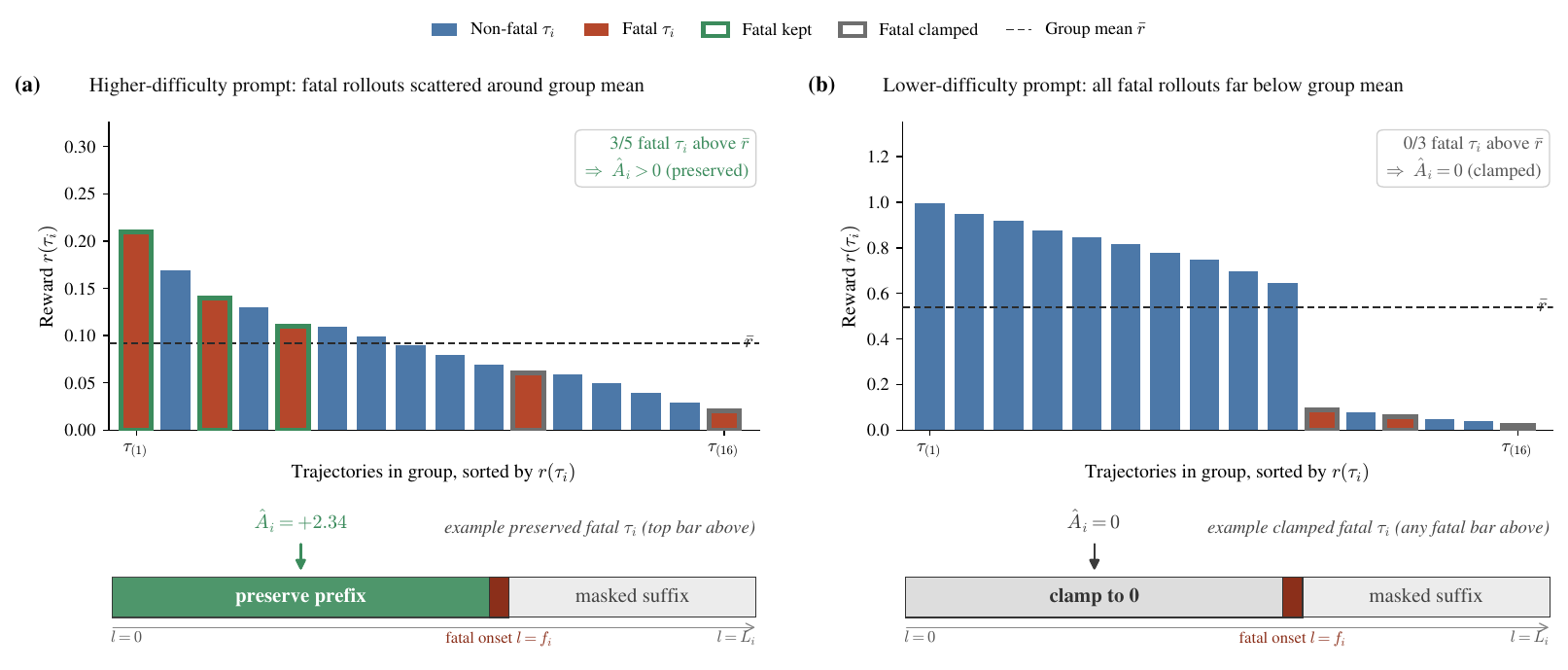}
  \caption{\small%
    \textbf{Fatal-aware masking with one-sided clamping on two illustrative rollout groups.}
    Each panel shows a group of $G{=}16$ rollouts (non-fatal in blue, fatal in brick red) together with a cartoon of the resulting token-level update on one representative fatal trajectory $\tau_i$. Bar outlines mark fatal rollouts whose pre-clamp score $\widetilde{r}_i$ exceeds the group mean $\bar{r}$ (green, preserved) or falls below it (dark grey, clamped).
    \textbf{(a) Higher-difficulty prompt.} When $\bar{r}$ is low, several fatal prefixes already beat it: $\widetilde{r}_i > 0$ and $\hat{A}_i = \widetilde{r}_i$, so gradients flow only through the viable prefix (tokens up to the fatal onset $f_i$) while the post-failure suffix is hard-masked---the partial reasoning that reached the failure point is \emph{reinforced} even though the trajectory itself is unsolvable.
    \textbf{(b) Lower-difficulty prompt.} When most rollouts succeed, every fatal reward falls far below $\bar{r}$ and one-sided clamping sets $\hat{A}_i = \max(\widetilde{r}_i,0) = 0$, degenerating to pure hard-masking and avoiding any suppression of the possibly-valid prefix.%
  }
  \label{fig:fatal-clamping-cases}
\end{figure}

\paragraph{Empirical Visualization of the Two Cases.}
Figure~\ref{fig:fatal-clamping-cases} visualizes how $\hat{A}_i = \max(\widetilde{r}_i, 0)$ interacts with per-group difficulty on two representative rollout groups. In the higher-difficulty case, most fatal trajectories still emit a coherent prefix before hitting a tool error; a fraction of them beat the group mean and contribute a positive gradient to the prefix tokens only. In the lower-difficulty case, every fatal trajectory is dominated by the successful non-fatal rollouts, and clamping to zero prevents a noisy negative signal from pushing the policy \emph{away} from what may actually be a valid prefix.
Figure~\ref{fig:fatal-clamping-aggregate} shows that this split is not anecdotal: aggregated over $10{,}000$ groups, the vast majority of fatal trajectories sit on the negative side of the pre-clamp score and are safely zeroed out, while the small preserved tail is distributionally close to the right mode of the non-fatal reference. One-sided clamping thus recovers a principled credit signal from failed tool-call trajectories without amplifying their inherent noise.

\section{Related Work}
\label{sec:related_work}

\subsection{Multimodal Agentic Search}
The integration of active search has reframed LLMs from static knowledge bases into agentic reasoners capable of dynamic information retrieval. Search-R1~\citep{jin2025searchr1trainingllmsreason} crystallized this shift by using RL to incentivize autonomous, multi-turn querying within the reasoning chain. This paradigm has since migrated to multimodal domains; MMSearch-R1~\citep{mmsearch-r1} and Vision-DeepResearch~\citep{huang2026vision, deepmmsearch_r1} embed visual retrieval into agent pipelines, while recent efforts~\citep{zhang2025skywork, yao2026mm,chen2026unify,feng2026gen, li2025websailornavigatingsuperhumanreasoning} attempt to unify image manipulation with web search. However, a common but fragile assumption in these works is the availability of pristine visual inputs. In practice, when agents encounter degraded or text-dense real-world images, the "search-only" approach fails as retrieval cannot fix fundamentally broken visual evidence.

\subsection{Visual Perception and Retrieval}
The performance of multimodal agents is often bottlenecked not by retrieval logic, but by the fidelity of initial perception~\cite{zhang2026fix,wei2025perception}. While RAG frameworks like VisRAG~\citep{yu2024visrag} emphasize preserving visual structure, they treat the model as a passive observer that must "make do" with whatever is retrieved. Even as tool-augmented agents~\citep{song2026adareasonerdynamictoolorchestration, deepeyesv2, webwatcher} introduce dynamic orchestration, their toolsets remain largely homogeneous and scenario-bound. We argue that robust reasoning requires \emph{active perception}: the agent must not only search but also intervene—autonomously invoking tools like super-resolution or specialized OCR to remediate visual noise before attempting to reason over it.

\subsection{Reinforcement Learning for Agentic Reasoning}
Training agents to operate in long-horizon, multi-tool environments poses substantial challenges for standard RL. While GRPO~\citep{shao2024deepseekmath} has shown strong effectiveness in aligning reasoning trajectories for language models~\citep{guo2025deepseek,chen2025advancing,chen2025ares,feng2025video, feng2025onethinker, hu2026openvlthinkerv2generalistmultimodalreasoning}, applying it to agentic rollouts with diverse tool interactions remains non-trivial. The primary challenge lies in \textit{cascading failures}: an early tool error renders the rest of the trajectory incoherent~\citep{vuddanti2025paladin, zhu2025where, zheng2026deepeyesincentivizingthinkingimages,dong2025agenticreinforcedpolicyoptimization}, yet these post-failure tokens still inject noise into the policy gradient~\citep{REGRPO2025, deng2026group, CoRPO2025}. Furthermore, standard group-normalization tends to penalize the valid reasoning prefixes of partially successful rollouts~\citep{huang2026vision}. To counter this, we introduce a fatal-aware RL objective that prunes learning signals from post-failure states and uses one-sided advantage clamping to protect the gradients of constructive reasoning steps.

\vspace{-0.1cm}
\section{Conclusion}
\label{sec:conclusion}
\vspace{-0.1cm}

We present \textbf{OpenSearch-VL}, a fully open recipe for training multimodal deep search agents with agentic reinforcement learning. Our recipe combines a Wikipedia-based data curation pipeline that mitigates one-step retrieval shortcuts and produces two high-quality datasets: \emph{SearchVL-SFT-36k} \& \emph{SearchVL-RL-8k}; a diverse tool environment spanning retrieval, image enhancement, and attention-and-parsing tools; and a multi-turn fatal-aware GRPO algorithm that preserves useful pre-failure reasoning through one-sided advantage clamping. Based on this recipe, OpenSearch-VL achieves over 10-point average gains across seven multimodal deep search benchmarks, with competitive performance on representative tasks such as VDR compared with strong proprietary reasoning models. We will release our data, code, models, and training recipe, with the aim to lower the reproducibility barrier and provide an open foundation for future research on multimodal deep search agents.

\section*{Limitations and Future Work}
\label{apdx:limitations}

A non-trivial fraction of training instability traces to the external tool environment $\mathcal{E}$---including search ranking drift, fetch failures, and occasional summarization hallucinations in \textsc{TextSearch} and \textsc{ImageSearch}---which inflates reward variance and motivates future work on on-policy reliability estimation. Furthermore, our composite reward (Eq.~\ref{eq:reward}) relies on proprietary GPT-4o judges, which are costly, version-dependent, and currently score only textual queries while ignoring intermediate visual operations (e.g., \textsc{Crop}); replacing these with open process reward models covering the full visual action space $\mathcal{T}_v$ remains a natural next step. Finally, exact numerical reproducibility is challenged by the reliance on these externally hosted APIs (e.g., Serper, PaddleX \textsc{OCR}) and the prohibitive cost of reporting multi-seed error bars for large-scale evaluations (Table~\ref{tab:search-qa-main}). To mitigate these constraints and support open research, we will release our complete datasets (SearchVL-SFT-36k / SearchVL-RL-8k), model checkpoints, and training code under permissive licenses.

\clearpage

{
    \bibliographystyle{plainnat}
    \bibliography{ref}
}

\newpage

\appendix

\section*{Appendix}
\label{sec:appendix}

\addcontentsline{toc}{part}{Appendix}

\startcontents[appendix]
\begingroup
\hypersetup{linkcolor=promptblue}
\printcontents[appendix]{}{1}{\section*{Appendix Contents}}
\endgroup

\section{Preliminary of Reinforcement Learning}
\label{apdx:rl}

This section details the reinforcement learning (RL) preliminaries underpinning our multi-turn training objective. We first review the standard Group Relative Policy Optimization (GRPO) algorithm~\citep{shao2024deepseekmath}, which operates on single-turn generations. Subsequently, we describe its extension to search-augmented rollouts, as introduced by Search-R1~\citep{jin2025searchr1trainingllmsreason}, wherein the policy interleaves generation with calls to an external search engine. These formulations serve as the direct precursors to the multi-turn, multi-tool objective employed by \textsc{OpenSearch-VL} (see Sec.~\ref{sec:multi-turn-grpo}).

\subsection{Standard GRPO}

GRPO~\citep{shao2024deepseekmath} is an actor-critic variant of Proximal Policy Optimization (PPO)~\citep{schulman2017proximal} that eliminates the reliance on a learned value function. Instead of maintaining a separate critic model to estimate a baseline, GRPO derives its advantage estimate by comparing multiple responses sampled for the same prompt, collectively defining a \emph{group}. This parameter-efficient design is particularly advantageous in settings where only a scalar outcome-level reward is available, reducing memory overhead and mitigating the instability of training a dense token-level value function.

Formally, given a prompt distribution $\mathcal{D}$ and a reference policy $\pi_{\theta_{\text{old}}}$, GRPO samples a group of $G$ candidate responses $\{o_1, o_2, \dots, o_G\} \sim \pi_{\theta_{\text{old}}}(\cdot \mid q)$ for each prompt $q \sim \mathcal{D}$. Each response is evaluated via a reward function $r_i = r(q, o_i)$. To compute the advantages, GRPO standardizes these rewards within the local group:
\begin{equation}
    \hat{A}_{i,t} \;=\; \widetilde{r}_i \;=\; \frac{r_i - \mathrm{mean}(\{r_j\}_{j=1}^G)}{\mathrm{std}(\{r_j\}_{j=1}^G)},
    \quad \text{for all tokens } t = 1, \dots, |o_i|.
    \label{eq:grpo-advantage}
\end{equation}
By this formulation, all constituent tokens of a given response $o_i$ are assigned a uniform advantage scalar. The policy parameters $\theta$ are subsequently optimized by maximizing the clipped surrogate objective:
\begin{equation}
\begin{aligned}
    \mathcal{J}_{\text{GRPO}}(\theta)
    \;=\;&\;\; \mathbb{E}_{q \sim \mathcal{D},\, \{o_i\}_{i=1}^G \sim \pi_{\theta_{\text{old}}}(\cdot \mid q)} \Bigg[
    \frac{1}{G} \sum_{i=1}^{G} \frac{1}{|o_i|} \sum_{t=1}^{|o_i|} \bigg\{ \\
    &\quad\; \min\!\Big( \rho_{i,t}(\theta)\,\hat{A}_{i,t},\;\;
    \mathrm{clip}\bigl(\rho_{i,t}(\theta),\,1{-}\epsilon,\,1{+}\epsilon\bigr)\,\hat{A}_{i,t} \Big) \\
    &\quad\; -\; \beta\, \mathbb{D}_{\text{KL}}\!\bigl[\pi_\theta \,\|\, \pi_{\text{ref}}\bigr] \bigg\} \Bigg],
\end{aligned}
\label{eq:grpo-objective}
\end{equation}
where the importance sampling ratio is defined as
\begin{equation}
    \rho_{i,t}(\theta) \;=\; \frac{\pi_\theta(o_{i,t} \mid q,\, o_{i,<t})}{\pi_{\theta_{\text{old}}}(o_{i,t} \mid q,\, o_{i,<t})}.
\end{equation}
Here, $\epsilon$ represents the probability ratio clipping hyperparameter, and $\beta$ modulates the Kullback--Leibler (KL) divergence penalty against a fixed reference policy $\pi_{\text{ref}}$. In contrast to standard PPO---which subsumes the KL penalty directly into the reward signal and necessitates a parameterized value network to approximate $\hat{A}_{i,t}$~\citep{schulman2017proximal}---GRPO enforces the KL regularization explicitly within the loss landscape. By deriving $\hat{A}_{i,t}$ strictly from empirical group statistics (Eq.~\ref{eq:grpo-advantage}), GRPO naturally aligns with the comparative structure of preference-based reward modeling.

\subsection{GRPO with Search Engine}

Search-R1~\citep{jin2025searchr1trainingllmsreason} extends the GRPO framework from unimodal, single-turn generation to an interleaved, search-augmented generative process. Rather than sampling an isolated response $o_i \sim \pi_{\theta_{\text{old}}}(\cdot \mid q)$, Search-R1 generates a multi-step rollout comprising both policy-emitted tokens and external evidence retrieved from a search engine $\mathcal{R}$. This search-augmented sampling distribution is denoted as:
\begin{equation}
    o_i \;\sim\; \pi_{\theta_{\text{old}}}(\cdot \mid q; \mathcal{R}) \;=\; \pi_{\theta_{\text{old}}}(\cdot \mid q) \,\bigotimes\, \mathcal{R},
\end{equation}
where $\bigotimes$ represents the interleaving operator: whenever the policy emits a search command, the retrieved documents are appended to the conditioning context, and the policy resumes generation conditioned on this expanded prefix.

To prevent the optimizer from erroneously updating parameters based on exogenous environmental tokens, Search-R1 utilizes a masking function that filters out non-generated tokens. Defining $M_{\text{gen}}(y_{i,t}) \in \{0, 1\}$ as an indicator variable where $M_{\text{gen}}(y_{i,t}) = 1$ if $y_{i,t}$ is explicitly generated by the policy, the corresponding Search-R1 GRPO objective becomes:
\begin{equation}
\begin{aligned}
    \mathcal{J}_{\text{GRPO}}^{\mathcal{R}}(\theta)
    \;=\;&\;\; \mathbb{E}_{q \sim \mathcal{D},\, \{o_i\}_{i=1}^{G} \sim \pi_{\theta_{\text{old}}}(\cdot\mid q;\,\mathcal{R})}
    \Bigg[
    \frac{1}{G} \sum_{i=1}^{G}
    \frac{1}{\sum_{t=1}^{|o_i|} M_{\text{gen}}(y_{i,t})}
    \sum_{\substack{t=1 \\ M_{\text{gen}}(y_{i,t})=1}}^{|o_i|} \bigg\{ \\
    &\quad\; \min\!\Big( \rho_{i,t}^{\mathcal{R}}(\theta)\,\hat{A}_{i,t},\;\;
    \mathrm{clip}\bigl(\rho_{i,t}^{\mathcal{R}}(\theta),\,1{-}\epsilon,\,1{+}\epsilon\bigr)\,\hat{A}_{i,t} \Big) \\
    &\quad\; -\; \beta\, \mathbb{D}_{\text{KL}}\!\bigl[\pi_\theta \,\|\, \pi_{\text{ref}}\bigr] \bigg\} \Bigg],
\end{aligned}
\label{eq:grpo-search}
\end{equation}
featuring the environment-conditioned importance ratio:
\begin{equation}
    \rho_{i,t}^{\mathcal{R}}(\theta) \;=\; \frac{\pi_\theta(y_{i,t} \mid q,\, y_{i,<t};\, \mathcal{R})}{\pi_{\theta_{\text{old}}}(y_{i,t} \mid q,\, y_{i,<t};\, \mathcal{R})}.
\end{equation}
Relative to standard GRPO (Eq.~\ref{eq:grpo-objective}), this adaptation introduces two pivotal modifications. First, the expectation is calculated with respect to the interleaved distribution $\pi_{\theta_{\text{old}}}(\cdot \mid q; \mathcal{R})$, ensuring the causal conditioning history incorporates all previously retrieved external evidence. Second, the objective is normalized strictly by the count of generated tokens, $\sum_t M_{\text{gen}}(y_{i,t})$, restricting gradient calculations to policy-authored positions. Equivalently masking the KL divergence ensures the model is not penalized for distributional divergence over environmental observations. These mechanisms adapt GRPO to tool-in-the-loop architectures and establish the theoretical foundation for our multimodal environment $\mathcal{E}$ formulation (Sec.~\ref{sec:multi-turn-grpo}).

\section{Multi-Turn Search Fatal-Aware GRPO Details}
\label{apdx:grpo-detail}

Building upon the foundations established in Appendix~\ref{apdx:rl}, this section details the mechanics of our multi-turn, search-augmented RL objective introduced in Sec.~\ref{sec:multi-turn-grpo}. We articulate the detection logic for fatal execution cascades and rigorously derive the advantage estimation process incorporating one-sided clamping; the composite reward $r(\tau)$ and its three components $r_{\text{fmt}},\, r_{\text{acc}},\, r_{\text{query}}$ are defined directly in Sec.~\ref{sec:multi-turn-grpo}.

\begin{figure}[t]
  
  \centering
  \includegraphics[width=0.6\linewidth]{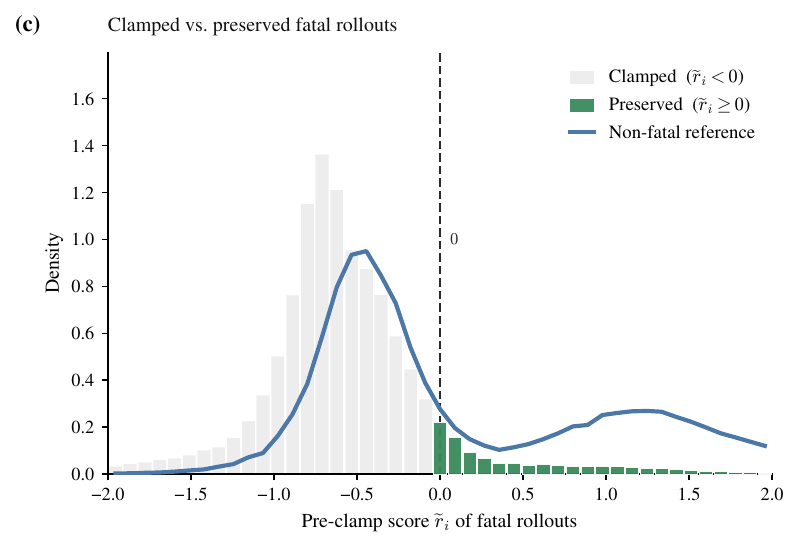}
  \caption{%
    \textbf{Aggregate distribution of clamped and preserved fatal rollouts.}
    Pre-clamp group-normalized scores $\widetilde{r}_i$ aggregated over
    $10{,}000$ groups ($G{=}16$; $47{,}978$ fatal rollouts in total).
    $91.8\%$ of fatal rollouts fall on the negative side of the clamp
    threshold (mean $\overline{\widetilde{r}}_{-}{=}-0.68$) and are zeroed
    out by $\hat{A}_i{=}\max(\widetilde{r}_i,0)$; the remaining $8.2\%$ are
    preserved (mean $\overline{\widetilde{r}}_{+}{=}+0.57$) and overlap with
    the positive mode of the non-fatal reference density (blue line:
    pre-clamp scores of all non-fatal rollouts from the same groups).
    Preservation is therefore driven by the same group-normalized score that
    GRPO already computes, and the preserved fatal prefixes sit in the same
    score regime as stronger non-fatal behaviors -- not an ad-hoc heuristic.%
  }
  \label{fig:fatal-clamping-aggregate}
  
\end{figure}

\subsection{Fatal Step Detection Logic}
\label{apdx:fatal-detail}

Unconstrained interactions with external environments inevitably yield ``fatal'' states---irrecoverable error cascades rendering subsequent reasoning invalid. We articulate the procedure for computing the fatal step index $f_i$, which drives the fatal-aware token mask (Eq.~\ref{eq:fatal-mask}).

For a given trajectory $\tau_i$ of length $L_i$, we instantiate a stateful error counter $n_{\text{err}}$ initialized to zero. At each sequential step $l \in \{0, 1, \dots, L_i\}$, the counter transitions according to:
\begin{equation}
  n_{\text{err}}^{(l)} \;=\;
  \begin{cases}
    n_{\text{err}}^{(l-1)} + 1 & \text{if step } l \text{ triggers a tool-execution error},\\
    0 & \text{otherwise}.
  \end{cases}
\end{equation}
The fatal step index is subsequently defined as the earliest step where the error threshold is breached:
\begin{equation}
    f_i = \min\{l : n_{\text{err}}^{(l)} = K\}.
\end{equation}
In our implementation, we set $K=3$. If the threshold is never reached, the trajectory is classified as non-fatal, denoted by setting $f_i = L_i+1$. 

This formulation embodies two critical design properties. First, the explicit reset condition ($n_{\text{err}}^{(l)} = 0$) ensures that \emph{isolated} transient errors---which are ubiquitous and often recoverable in realistic web environments---do not prematurely abort the trajectory. Second, the conservative threshold ($K=3$) dictates that a trajectory is only deemed fatal following three \emph{consecutive} failures, affording the autoregressive policy a robust opportunity to self-correct prior to gradient masking. Tool-execution errors triggering this counter encompass timeouts, malformed API payloads, and argument-parsing failures; these are flagged deterministically by the execution sandbox, independent of any learned heuristic.

\subsection{Advantage Derivation with One-Sided Clamping}
\label{apdx:advantage-detail}

We rigorously detail the advantage estimation protocol underpinning the objective function in Eq.~\ref{eq:our-grpo}.

\paragraph{Step 1: Unbiased Group Statistics.}
For a sampled group of $G$ rollouts originating from the identical prompt $(I_0, q)$, the empirical group mean and standard deviation are computed over the composite rewards:
\begin{equation}
  \mu_{\mathcal{G}} \;=\; \operatorname{sg}\!\left( \frac{1}{G}\sum_{j=1}^{G} r(\tau_j) \right), \qquad
  \sigma_{\mathcal{G}} \;=\; \operatorname{sg}\!\left( \sqrt{ \frac{1}{G}\sum_{j=1}^{G} \bigl(r(\tau_j) - \mu_{\mathcal{G}}\bigr)^{2} } \right).
\end{equation}
Here, $\operatorname{sg}(\cdot)$ denotes the stop-gradient operator. Consequently, $\mu_{\mathcal{G}}$ and $\sigma_{\mathcal{G}}$ act as static normalizing constants during backpropagation. Gradients flow exclusively through the importance sampling ratios evaluated on the unmasked, policy-generated tokens.

\paragraph{Step 2: Universal Group Normalization.}
Each trajectory within the group is assigned a standardized reward:
\begin{equation}
  \widetilde{r}_i \;=\; \frac{r(\tau_i) - \mu_{\mathcal{G}}}{\sigma_{\mathcal{G}} + \delta},
\end{equation}
where $\delta > 0$ provides numerical stability. Crucially, all $G$ trajectories---expressly including those truncated by the fatal condition ($f_i \le L_i$)---are incorporated into the calculation of $\mu_{\mathcal{G}}$ and $\sigma_{\mathcal{G}}$. This deliberate inclusion ensures that fatal trajectories actively shape the group-level baseline, anchoring the relative performance ranking across the entire sampled cohort.

\paragraph{Step 3: Fatal-Aware One-Sided Clamping.}
The standardized rewards are mapped to the final advantage estimates $\hat{A}_i$ via a piecewise clamping function:
\begin{equation}
  \hat{A}_i \;=\;
  \begin{cases}
    \widetilde{r}_i & \text{if } f_i = L_i{+}1 \quad\text{(non-fatal trajectory)},\\[4pt]
    \max\!\bigl(\widetilde{r}_i,\; 0\bigr) & \text{if } f_i \le L_i \quad\text{(fatal trajectory)}.
  \end{cases}
\end{equation}

\paragraph{Step 4: Gradient Dominance over Hard-Masking.}
We now formalize the informal claim that fatal-aware clamping strictly dominates the hard-masking baseline of~\citep{huang2026vision} in gradient informativeness. Let
\begin{equation}
  g_i^{\text{ours}} \;=\; \frac{1}{\sum_{t} M_{i,t}} \sum_{t=1}^{|\tau_i|} M_{i,t}\, \nabla_\theta \log \pi_\theta\!\bigl(y_{i,t} \mid h_{s(t)}\bigr)\, \hat{A}_i
  \label{eq:per-traj-grad}
\end{equation}
denote the (un-clipped) per-trajectory contribution to $\nabla_\theta \mathcal{J}(\theta)$ under our scheme, and let $g_i^{\text{hard}}$ denote the corresponding contribution under the hard-masking baseline, which discards every fatal trajectory in full so that $g_i^{\text{hard}} = 0$ whenever $f_i \le L_i$. We compare the two on the support of fatal trajectories.

\begin{proposition}[Dominance over hard-masking]
\label{prop:dominance}
Fix any fatal trajectory $\tau_i$ ($f_i \le L_i$). Then:
\begin{enumerate}[leftmargin=*,itemsep=1pt,topsep=1pt]
  \item If $\widetilde{r}_i < 0$, then $g_i^{\text{ours}} = g_i^{\text{hard}} = 0$.
  \item If $\widetilde{r}_i \ge 0$, then $g_i^{\text{hard}} = 0$, while $g_i^{\text{ours}}$ coincides with the search-augmented GRPO gradient (Eq.~\ref{eq:grpo-search}) of $\tau_i$ restricted to its viable prefix $\{t : s(t) < f_i,\; M_{\text{gen}}(y_{i,t}) = 1\}$, evaluated at the non-negative advantage $\widetilde{r}_i$.
\end{enumerate}
Consequently, $g_i^{\text{ours}}$ is weakly informative-dominant over $g_i^{\text{hard}}$: it never propagates gradient through the post-fatal suffix, never penalises the viable prefix, and strictly extracts positive reinforcement on prefixes whose group-normalised return exceeds the baseline.
\end{proposition}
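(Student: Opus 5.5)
The argument is a direct unfolding of the definitions in Eqs.~\ref{eq:fatal-mask}, \ref{eq:advantage} and \ref{eq:per-traj-grad}, split on the sign of $\widetilde{r}_i$. Throughout, $\tau_i$ is fatal ($f_i \le L_i$), so the hard-masking baseline discards it entirely and $g_i^{\text{hard}}=0$ by definition. Only $g_i^{\text{ours}}$ needs analysis. Since $\hat{A}_i$ is a trajectory-level scalar that does not depend on $t$, it factors out of the sum in Eq.~\ref{eq:per-traj-grad}, giving
\begin{equation*}
g_i^{\text{ours}} \;=\; \hat{A}_i \cdot \frac{1}{\sum_t M_{i,t}} \sum_{t} M_{i,t}\, \nabla_\theta \log \pi_\theta\bigl(y_{i,t}\mid h_{s(t)}\bigr).
\end{equation*}

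\textbf{Case 1: $\widetilde{r}_i<0$.} The fatal branch of Eq.~\ref{eq:advantage} gives $\hat{A}_i=\max(\widetilde{r}_i,0)=0$. Hence $g_i^{\text{ours}}=0=g_i^{\text{hard}}$, provided the normalizer $\sum_t M_{i,t}$ is nonzero.

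This proviso is the only real subtlety. If $f_i=0$, the viable prefix is empty, all $M_{i,t}=0$, and the expression has the form $0/0$. I would adopt the standard convention that a trajectory with no unmasked tokens contributes zero, i.e.\ the empty sum times anything is $0$. I would state this convention explicitly, and it then covers both cases.

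\textbf{Case 2: $\widetilde{r}_i\ge0$.} Here $\hat{A}_i=\widetilde{r}_i\ge 0$. By Eq.~\ref{eq:fatal-mask}, $M_{i,t}=1$ exactly on the set $S_i=\{t: s(t)<f_i,\ M_{\text{gen}}(y_{i,t})=1\}$, so the normalizer equals $|S_i|$ and the sum runs over $S_i$. I would then compare this with the un-clipped gradient of the Search-R1 objective (Eq.~\ref{eq:grpo-search}) for the single trajectory $\tau_i$:
\begin{itemize}
\item Replace $M_{\text{gen}}$ by the indicator of $S_i$.
\item Set the advantage to $\widetilde{r}_i$.
\item Drop the KL term, consistent with how Eq.~\ref{eq:our-grpo} treats it.
\item Use that at $\theta=\theta_{\text{old}}$ the gradient of $\rho_{i,t}$ is $\nabla\log\pi_\theta$, and that clipping is inactive there.
\end{itemize}
Term by term, this reproduces $g_i^{\text{ours}}$.

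\textbf{Consequences.} The three closing claims then follow directly:
\begin{itemize}
\item \emph{No gradient through the suffix:} the mask vanishes whenever $s(t)\ge f_i$.
\item \emph{No penalty on the prefix:} $\hat{A}_i\ge0$ in both cases, so the prefix is never pushed down.
\item \emph{Strict extraction:} when $\widetilde{r}_i>0$ and $S_i\neq\emptyset$, $g_i^{\text{ours}}$ is a positive multiple of the prefix log-likelihood gradient, which is generically nonzero, whereas $g_i^{\text{hard}}=0$.
\end{itemize}

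The only obstacle I anticipate is being precise about what "coincides" means. It holds for the un-clipped, KL-free, on-policy surrogate gradient. I would state this qualification up front rather than claim equality with the fully clipped objective away from $\theta_{\text{old}}$.
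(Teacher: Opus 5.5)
Your proposal is correct and follows the paper's proof essentially step for step. Like the paper, you split on the sign of $\widetilde{r}_i$, use $\hat{A}_i=0$ in the negative case, and in the non-negative case identify the masked per-trajectory gradient token by token with the Search-R1 gradient on the viable prefix at advantage $\widetilde{r}_i$. Your added qualifications are refinements the paper leaves implicit rather than a different route: the $0/0$ convention for an empty prefix (which can arise only under the main text's ``cascade commences'' reading of $f_i$, since the appendix definition forces $f_i\ge 2$), dropping the KL term, and reading ``coincides'' as the un-clipped on-policy gradient at $\theta=\theta_{\text{old}}$.
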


\begin{proof}
By Eq.~\ref{eq:advantage}, $\widetilde{r}_i < 0$ implies $\hat{A}_i = 0$, so the surrogate term and its gradient vanish identically in Eq.~\ref{eq:per-traj-grad}; combined with $g_i^{\text{hard}} = 0$ by definition of hard-masking, this establishes (i). For $\widetilde{r}_i \ge 0$, $\hat{A}_i = \widetilde{r}_i$, while the fatal-aware mask $M_{i,t} = M_{\text{gen}}(y_{i,t}) \cdot \mathbb{1}[s(t) < f_i]$ retains exactly the policy-generated tokens of the viable prefix and zeros out (a) environment-emitted tokens and (b) all post-fatal tokens. On this support the per-token integrand of Eq.~\ref{eq:per-traj-grad} reduces token-by-token to that of Eq.~\ref{eq:grpo-search} with the same importance ratio $\rho_{i,t}$ and advantage $\widetilde{r}_i \ge 0$, so the per-token surrogates---and hence their gradients---agree. Since $g_i^{\text{hard}} = 0$ by definition, this establishes (ii).
\end{proof}

In aggregate, the one-sided clamp safely ignores invalid credit assignment while selectively harvesting positive reinforcement from prematurely truncated yet high-quality exploratory rollouts.

\paragraph{Bias of Group Statistics under Clamping.}
The asymmetric definition of $\hat{A}_i$ deserves an honest accounting. We compute $(\mu_\mathcal{G},\sigma_\mathcal{G})$ in Step~1 over \emph{all} $G$ rollouts---including fatal ones---so the standardised score $\widetilde{r}_i$ is zero-mean within the group; Step~3 then maps any negative $\widetilde{r}_i$ on a fatal trajectory to zero. Letting $\mathcal{F} \subseteq \{1,\dots,G\}$ index the fatal subset of the group,
\begin{equation}
  \frac{1}{G}\sum_{i=1}^{G} \hat{A}_i \;=\; \underbrace{\frac{1}{G}\sum_{i=1}^{G} \widetilde{r}_i}_{=\,0} \;+\; \underbrace{\frac{1}{G}\sum_{i \in \mathcal{F}} \max\!\bigl(0,\,-\widetilde{r}_i\bigr)}_{\;=\,b_\mathcal{G}\,\ge\,0},
  \label{eq:clamp-bias}
\end{equation}
so the clamp induces a non-negative bias $b_\mathcal{G}$ relative to the zero-mean GRPO baseline, in exchange for non-zero gradient on viable prefixes. We argue this trade-off is benign in our regime for two reasons. (a)~\emph{Concentration of fatal rollouts in the lower mode.} Figure~\ref{fig:fatal-clamping-aggregate} shows that $91.8\%$ of fatal rollouts have $\widetilde{r}_i < 0$ and are clamped to zero, while the residual $8.2\%$ that survive sit in the same score regime as competitive non-fatal rollouts; $b_\mathcal{G}$ is therefore dominated by samples whose viable prefix is empirically high-quality, rather than by indiscriminate inflation of low-quality fatals. (b)~\emph{Preservation of relative ranking.} Because $b_\mathcal{G}$ shifts only the fatal subset upward and never displaces non-fatal advantages, the within-group ordering between non-fatal trajectories---which is the actual signal GRPO exploits---remains intact. We therefore interpret $b_\mathcal{G}$ as a controlled positive bias that buys back gradient on prematurely truncated yet promising prefixes; replacing the clamp with an unbiased estimator (e.g., a separate baseline computed only over non-fatal rollouts, or a doubly-robust correction for $b_\mathcal{G}$) is an interesting alternative we leave for future work.

A sanity check of the resulting clamping behaviour, including a per-group-difficulty breakdown and the aggregate score distribution over $10{,}000$ groups, is reported in Sec.~\ref{sec:experiments} (Figs.~\ref{fig:fatal-clamping-cases} and~\ref{fig:fatal-clamping-aggregate}).

\section{Implementation Details}
\label{apdx:implementation-details}

Our agentic SFT pipeline builds on \textsc{LlamaFactory}~\citep{zheng2024llamafactory}, which we extend with multi-turn, tool-interleaved data collators and a Qwen3-VL-aware vision/text packing scheme so that interleaved image observations produced by visual tools (\textsc{Crop}, \textsc{Sharpen}, \textsc{SuperResolution}, \textsc{PerspectiveCorrect}, \textsc{OCR}) can be consumed verbatim by the policy. Our RL pipeline builds jointly on \textsc{rLLM}~\citep{rllm2025} and \textsc{Vision-DeepResearch}~\citep{huang2026vision}: from the former we adopt the asynchronous agent rollout architecture (decoupled trajectory generation, replay, and policy-update workers), and from the latter we adopt the multimodal trajectory abstraction. On top of these two, we implement a Qwen3-VL chat-template renderer, an interleaved image-token re-alignment routine for variable-length visual observations, and an asynchronous multi-turn rollout engine that supports tool-call interruption and resumption. The resulting pipeline is what we use to train all OpenSearch-VL variants in this paper.

\subsection{SFT Training Configuration}
\label{apdx:sft-config}

\begin{table}[!tbp]
\centering
\small
\setlength{\tabcolsep}{6pt}
\renewcommand{\arraystretch}{1.1}
\setlength{\abovecaptionskip}{14pt}
\begin{tabular}{@{}l l p{0.42\textwidth}@{}}
\toprule
\textbf{Category} & \textbf{Hyperparameter} & \textbf{Value / Setting} \\
\midrule
\multirow{4}{*}{\textbf{Model}}
& Base Model & Qwen3-VL-8B-Instruct (32B, 30B-A3B) \\
& Image Max Pixels & $262{,}144$ ($\approx 512 \times 512$) \\
& Video Max Pixels & $16{,}384$ ($\approx 128 \times 128$) \\
& Trust Remote Code & True \\
\midrule
\multirow{5}{*}{\textbf{Method}}
& Finetuning Type & Full \\
& Vision Tower Frozen & False \\
& MM Projector Frozen & False \\
& DeepSpeed Stage & ZeRO-3 \\
& Mixed Precision & bfloat16 \\
\midrule
\multirow{5}{*}{\textbf{Dataset}}
& Total Samples & 36592 \\
& Template & \texttt{qwen3\_vl} \\
& Cutoff Length & $32{,}000$ tokens \\
& Preprocessing Workers & 16 \\
& Dataloader Workers & 4 \\
\midrule
\multirow{8}{*}{\textbf{Training}}
& Batch Size per Device & 1 \\
& Gradient Accumulation Steps & 1 \\
& Effective Batch Size & 256 ($= 1 \times 1 \times 256~\text{GPUs}$) \\
& Gradient Checkpointing & True \\
& Learning Rate & $2.0 \times 10^{-5}$ \\
& Epochs & 8 \\
& LR Scheduler & cosine \\
& Warmup Ratio & 0.1 \\
\midrule
\multirow{4}{*}{\textbf{Infrastructure}}
& Total GPUs & 256 (32 nodes $\times$ 8) \\
& Orchestration & Ray + DeepSpeed ZeRO-3 \\
& Placement Strategy & PACK \\
& Resources per Worker & 1 GPU \\
\midrule
\multirow{4}{*}{\textbf{Logging / I\,O}}
& Logging Steps & 5 \\
& Checkpoint Save Steps & 400 \\
& Plot Loss & True \\
& Report Backend & TensorBoard \\
\bottomrule
\end{tabular}
\caption{Agentic SFT training configuration and hyperparameters for \textsc{OpenSearch-VL}. All three model sizes (8B dense, 32B dense, 30B-A3B MoE) use the identical recipe; only the \emph{Base Model} row differs at launch time. Hyperparameters not exercised in our pipeline---e.g.\ held-out evaluation split, LoRA/adapter settings, or layer-freezing schedules---are omitted.}
\label{tab:sft-config}
\end{table}

Table~\ref{tab:sft-config} reports the full hyperparameter configuration used for the agentic supervised fine-tuning stage across all three \textsc{OpenSearch-VL} variants. The three model sizes (8B dense, 32B dense, 30B-A3B Mixture-of-Experts) share identical training hyperparameters---only the base checkpoint path differs---so we present them in a single consolidated table. All runs use full-parameter finetuning (including the vision tower and multi-modal projector) with DeepSpeed ZeRO-3 and Ray-based orchestration over 256 GPUs (32 nodes $\times$ 8 GPUs).

\subsection{RL Training Configuration}
\label{apdx:rl-config}

Table~\ref{tab:rl-config} reports the key hyperparameters for the multi-turn fatal-aware GRPO stage across the two \textsc{OpenSearch-VL} variants we RL-finetune: \textbf{8B dense} (\texttt{Qwen3-VL-8B-Instruct}) and \textbf{30B-A3B MoE} (\texttt{Qwen3-VL-30B-A3B-Instruct}). Both runs use the same async SGLang rollout engine and the same Megatron-parallel actor, with a shared algorithm template (RLOO leave-one-out advantage under the GRPO group-relative objective, low-variance KL controller, no critic). We list only the fields set explicitly in our launch scripts; framework defaults (e.g.\ optimizer betas, reward-model paths, checkpoint-resume flags) are omitted. Rows whose value is identical for both variants are written once; rows that differ are split into two cells.

\begin{table}[!tbp]
\centering
\small
\setlength{\tabcolsep}{4pt}
\renewcommand{\arraystretch}{1.1}
\setlength{\abovecaptionskip}{14pt}
\begin{tabular}{@{}l l c c@{}}
\toprule
\textbf{Category} & \textbf{Hyperparameter} & \textbf{8B (dense)} & \textbf{30B-A3B (MoE)} \\
\midrule
\multirow{2}{*}{\textbf{Model}}
& Base Model & \texttt{Qwen3-VL-8B-Instruct} & \texttt{Qwen3-VL-30B-A3B-Instruct} \\
& Training Dtype & \multicolumn{2}{c}{bfloat16} \\
\midrule
\multirow{5}{*}{\textbf{Data}}
& Train Batch Size (prompts) & \multicolumn{2}{c}{256} \\
& Val Batch Size & 64 & 512 \\
& Max Prompt Length & \multicolumn{2}{c}{$4{,}096$ tokens} \\
& Max Response Length & \multicolumn{2}{c}{$70{,}000$ tokens} \\
& Data Seed & \multicolumn{2}{c}{3407} \\
\midrule
\multirow{7}{*}{\textbf{Rollout}}
& Engine & \multicolumn{2}{c}{SGLang (async mode)} \\
& Rollout Tensor Parallel & \multicolumn{2}{c}{4} \\
& \# Samples per Prompt ($n$) & 8 & 16 \\
& GPU Mem.\ Utilization & 0.85 & 0.65 \\
& Train / Val Temperature & \multicolumn{2}{c}{0.7 / 0.7} \\
& Train / Val Top-$p$ & \multicolumn{2}{c}{1.0 / 0.95} \\
& Top-$k$ & \multicolumn{2}{c}{$-1$ (disabled)} \\
\midrule
\multirow{11}{*}{\textbf{Policy (Actor)}}
& Strategy & \multicolumn{2}{c}{Megatron-LM} \\
& Tensor Parallel (TP) & \multicolumn{2}{c}{4} \\
& Pipeline Parallel (PP) & \multicolumn{2}{c}{2} \\
& Context Parallel (CP) & 8 & 4 \\
& Expert Parallel / ETP & --- & 8 / 1 \\
& PPO Mini-batch Size & 64 & 128 \\
& PPO Max Token Len / GPU & $74{,}576$ & $74{,}596$ \\
& Micro-batch Size / GPU & \multicolumn{2}{c}{1} \\
& Dynamic Batch Size & \multicolumn{2}{c}{True} \\
& Param / Optim / Grad Offload & \multicolumn{2}{c}{CPU} \\
& Gradient Checkpointing & \multicolumn{2}{c}{Full recompute, uniform (1 layer)} \\
\midrule
\multirow{6}{*}{\textbf{Optim.\ / Loss}}
& Actor LR & \multicolumn{2}{c}{$1\times 10^{-6}$} \\
& PPO Clip Ratio (high) & \multicolumn{2}{c}{0.28} \\
& Entropy Coefficient & \multicolumn{2}{c}{0.0} \\
& Use KL Loss & \multicolumn{2}{c}{False} \\
& KL Loss / Controller Coef & \multicolumn{2}{c}{$1{\times}10^{-3}$ / $1{\times}10^{-3}$} \\
& Loss Aggregation & \multicolumn{2}{c}{\texttt{seq-mean-token-sum}} \\
\midrule
\multirow{3}{*}{\textbf{Algorithm}}
& Advantage Estimator & \multicolumn{2}{c}{RLOO (within GRPO objective)} \\
& KL Type & \multicolumn{2}{c}{low-variance KL} \\
& Fatal-aware Masking & \multicolumn{2}{c}{True (unknown + error)} \\
\midrule
\multirow{3}{*}{\textbf{Tool-Agent}}
& \# Parallel Tasks & \multicolumn{2}{c}{256} \\
& \# Parallel Tool Calls & \multicolumn{2}{c}{$2{,}048$} \\
& Stepwise Advantage & \multicolumn{2}{c}{False} \\
\midrule
\multirow{4}{*}{\textbf{Trainer}}
& Cluster & \multicolumn{2}{c}{8 nodes $\times$ 8 GPUs = 64 GPUs} \\
& Save / Test Freq (steps) & 50 / 10 & 1 / 10 \\
& Total Epochs & 100 & 5 \\
& Critic Warmup & \multicolumn{2}{c}{0 (critic-free)} \\
\midrule
\multirow{4}{*}{\parbox{1.6cm}{\raggedright\textbf{MoE-only\\(30B-A3B)}}}
& Router Dtype & --- & fp32 \\
& Auxiliary-Loss Coef & --- & 0.01 \\
& Router Z-loss Coef & --- & $1{\times}10^{-3}$ \\
& Permute Fusion & --- & True \\
\bottomrule
\end{tabular}
\caption{Key RL training hyperparameters for the multi-turn fatal-aware GRPO stage of \textsc{OpenSearch-VL}. The 8B dense run and the 30B-A3B MoE run share the same algorithm template and rollout/actor stack; values that differ between the two (response budget per prompt, mini-batch size, context/expert parallelism, MoE-specific routing coefficients, save cadence and total epochs) are shown side-by-side. Entries marked ``---'' are not applicable (e.g.\ no expert parallelism in the dense 8B model). Framework-default fields and cluster-network environment variables are omitted.}
\label{tab:rl-config}
\end{table}

\FloatBarrier

\section{Data Curation Details}
\label{apdx:data-curation}

This section complements Sec.~\ref{sec:vqa-construction} with the full operational specification of the multi-hop VQA construction pipeline and traces the pipeline end-to-end through a concrete seed page. All statistics below are computed on the \texttt{2025-05-01} snapshot of English Wikipedia.

\subsection{Path Sampling Hyperparameters}
\label{apdx:wiki-sampling}

We fix $\tau_\text{hub}=10{,}000$ on the in-degree measured against the snapshot. Any candidate node whose incoming count exceeds $\tau_\text{hub}$ is skipped, rejecting roughly the top $0.03\%$ of all article nodes. This cut-off excludes continent-, country-, and century-level pages (e.g.\ \texttt{United\_States}, \texttt{Queensland}, \texttt{21st\_century}, \texttt{English\_language}) for which the uniqueness invariant in Eq.~\ref{eq:fuzz-invariants} is routinely violated, while preserving the long tail of entity pages that carry substantive semantic content.

\paragraph{Path length distribution.}
Path lengths are sampled as $h\sim\mathrm{Categorical}(\{2,3,4\};\,(0.4,\,0.4,\,0.2))$. Shorter paths are favoured in order to keep the downstream rollout horizons tractable; the upper bound $h=4$ is set empirically, as walks longer than four hops rarely survive the uniqueness and non-leakage checks of Eq.~\ref{eq:fuzz-invariants} without heavy resampling.

Beyond the three rules stated in the main text, the walk additionally skips:
\begin{enumerate}[leftmargin=*,itemsep=1pt,topsep=1pt]
    \item any title containing the substring \texttt{(disambiguation)}, or beginning with \texttt{List of}, \texttt{Outline of}, \texttt{Index of}, or \texttt{Timeline of};
    \item all non-article namespaces, i.e.\ \texttt{Template:}, \texttt{Category:}, \texttt{File:}, \texttt{User:}, \texttt{Help:}, \texttt{Portal:}, \texttt{Wikipedia:};
    \item redirect pages: each outgoing link is first dereferenced to its target article, and the exclusion rules above are applied to the dereferenced target rather than the surface link.
\end{enumerate}

Seeds are drawn by stratified sampling across five coarse domains---\emph{Person}, \emph{Building/Place}, \emph{Location (non-hub)}, \emph{Organism}, and \emph{Artifact}---to balance the representation of visually groundable categories. A node is eligible as a seed iff it (i) exposes an infobox; (ii) links to at least one Wikimedia Commons image of resolution no smaller than $512\times 512$; and (iii) has in-degree in $[50,\,\tau_\text{hub}]$, so that the seed is neither a dead end nor a hub.

A walk rooted at a fixed seed is retried up to $10$ times whenever it (i) hits a hub or a filtered namespace, (ii) fails to produce any descriptor for some bridge $v_j$ that survives the LLM uniqueness evaluator, or (iii) terminates at a node $v_h$ whose infobox contains no attribute meeting the six-token length bound. Seeds that exhaust all $10$ attempts are dropped from the final pool.

\subsection{Running Example: \texttt{Australia\_Zoo}}
\label{apdx:vqa-running-example}

To make the five stages of Sec.~\ref{sec:vqa-construction} concrete, we trace the pipeline end-to-end on the seed page \texttt{Australia\_Zoo} with $h=2$.

The outgoing links of \texttt{Australia\_Zoo} are first partitioned by the exclusion rules of Appendix~\ref{apdx:wiki-sampling}. Representative entries of each partition are:
\begin{itemize}[leftmargin=*,itemsep=1pt,topsep=1pt]
    \item \emph{Retained} (entity pages, in-degree below $\tau_\text{hub}$): \texttt{Bob\_Irwin}, \texttt{Steve\_Irwin}, \texttt{Terri\_Irwin}, \texttt{Bindi\_Irwin}, \texttt{Robert\_Irwin}, \texttt{The\_Crocodile\_Hunter}, \texttt{Wildlife\_Warriors}, \texttt{Beerwah,\_Queensland}, \texttt{Rosedale,\_Queensland}, \texttt{Angkor\_Wat}, \dots
    \item \emph{Hub-rejected} (in-degree $>\tau_\text{hub}$): \texttt{Queensland}, \texttt{Brisbane}, \texttt{Australia}, \texttt{United\_States}, \texttt{Zoo}.
\end{itemize}
A random walk with $h=2$ then samples
\begin{equation*}
    \underbrace{\texttt{Australia\_Zoo}}_{v_0\,(\text{anchor})}\; \xrightarrow{\;\rho_1=\text{``managed by''}\;}\; \underbrace{\texttt{Steve\_Irwin}}_{v_1\,(\text{bridge})}\; \xrightarrow{\;\rho_2=\text{``spouse of''}\;}\; \underbrace{\texttt{Terri\_Irwin}}_{v_2\,(\text{answer})}.
\end{equation*}

From the lead paragraph of $v_2$, we extract the attribute \emph{date of Australian citizenship}, yielding the short answer $a=\text{``20 November 2009''}$ (3 tokens). GPT-4o~\citep{openai2024gpt4ocard} is then prompted with the full path and its relations to synthesize the canonical question
\begin{equation*}
\begin{aligned}
q_t \;=\;\, &\text{``On what date did \emph{Terri Irwin}, the wife of \emph{Steve Irwin}---the man who}\\
            &\text{took over management of \emph{Australia Zoo} in 1991---become an Australian citizen?''}
\end{aligned}
\end{equation*}

Rewriting proceeds from the farthest bridge $v_1=\texttt{Steve\_Irwin}$ toward $v_0$. Table~\ref{tab:fuzz-candidates} lists the descriptor candidates proposed by GPT-4o~\citep{openai2024gpt4ocard} from $v_1$'s Wikipedia context and the verdicts returned by the uniqueness evaluator. The first accepted descriptor is retained (ties broken uniformly at random).
\begin{table}[h]
\centering\small
\begin{tabular}{p{0.58\linewidth}p{0.36\linewidth}}
\toprule
\textbf{Descriptor for $v_1$} & \textbf{Uniqueness verdict}\\
\midrule
``the man who took over management of [$v_0$] in 1991'' & \checkmark~unique\\
``the wildlife documentary host killed by a stingray in 2006'' & \checkmark~unique\\
``the son of the founders of [$v_0$]'' & $\times$ -- Bob and Lyn had three children (Joy, Steve, Mandy)\\
\bottomrule
\end{tabular}
\caption{Descriptor candidates for the bridge node $v_1=\texttt{Steve\_Irwin}$ and the verdicts returned by the LLM uniqueness evaluator.}
\label{tab:fuzz-candidates}
\end{table}
The accepted descriptor produces the fuzzy form
\begin{equation*}
\begin{aligned}
q_f \;=\;\, &\text{``On what date did the wife of the man who took over management of}\\
            &\text{Australia Zoo in 1991 become an Australian citizen?''}
\end{aligned}
\end{equation*}

\begin{wrapfigure}{r}{0.45\linewidth}
  \vspace{-1.0em}
  \centering
  \includegraphics[width=\linewidth]{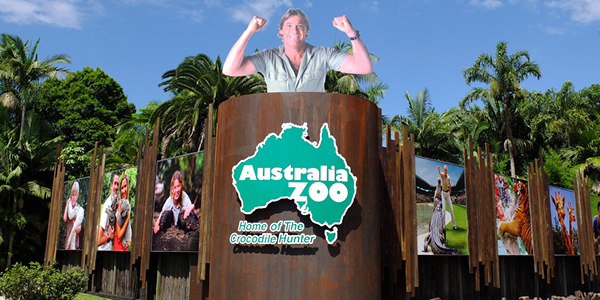}
  \caption{Representative image $I$ for the anchor $v_0=\texttt{Australia\_Zoo}$.}
  \label{apdxfig:au-zoo-anchor}
  \vspace{-0.6em} 
\end{wrapfigure}

All three invariants of Eq.~\ref{eq:fuzz-invariants} are satisfied: $a$ is preserved, the descriptor combined with ``wife of'' resolves uniquely to Terri Irwin, and $q_f$ contains no surface form or alias of any node along $P$ (e.g.\ neither \texttt{Steve Irwin}, \texttt{Terri Irwin}, nor \texttt{The Crocodile Hunter} appears in $q_f$).

We then retrieve $K=8$ candidate images from Wikimedia Commons under the query \texttt{Australia\_Zoo} and rank them by CLIP~\citep{radford2021learning} cosine similarity to the canonical description \emph{``Australia Zoo entrance''}. The top-ranked image (Fig.~\ref{apdxfig:au-zoo-anchor}, similarity $0.34$) is selected as $I$. Replacing the anchor mention in $q_f$ with a visual referring expression yields the final VQA instance
\begin{equation*}
\begin{aligned}
(I,\ q,\ a) \;=\;& \bigl(\,I,\ \text{``On what date did the wife of the man who took over management of \emph{the zoo in}}\\
                 & \text{\emph{the image} in 1991 become an Australian citizen?''},\ \text{``20 November 2009''}\,\bigr).
\end{aligned}
\end{equation*}

The instance passes all four checks of Sec.~\ref{sec:vqa-construction}: \emph{masking} ($q$ contains no entity name or alias from $\{\texttt{Australia Zoo}, \texttt{Steve Irwin}, \texttt{Terri Irwin}\}$); \emph{uniqueness} (a GPT-4o judge given only $q$ returns exactly one consistent answer); \emph{visual relevance} (CLIP similarity $0.34$ exceeds our $0.28$ threshold); and \emph{non-triviality}, verified jointly under the filtering process of Sec.~\ref{sec:filtering-enhancement}.

\subsection{Counterfactual Design Choices}
\label{apdx:vqa-counterfactuals}

We probe the necessity of the three key design choices in Sec.~\ref{sec:vqa-construction} by counterfactually relaxing each one on the running example above.

\paragraph{Anchor $=$ Answer.}
Replacing $I$ with a photograph of Terri Irwin and asking for her citizenship date reduces the task to a single reverse-image lookup: \textsc{ImageSearch} on the new $I$ returns the Wikipedia page of Terri Irwin directly, from which the citizenship date is read off in one step. The multi-hop structure collapses, independently of how the question is phrased.

\paragraph{No fuzzing.}
Retaining the canonical $q_t$ as the final question---i.e.\ skipping the fuzzy-rewriting stage---exposes all entity names in plain text, and a single \textsc{TextSearch}(``Terri Irwin Australian citizenship date'') suffices to recover $a$. The image $I$ becomes decorative rather than load-bearing.

\paragraph{No hub avoidance.}
If the walk were allowed to admit the hub \texttt{Queensland} as a bridge, a natural descriptor such as ``the zoo located in [$v_j$]'' would fail the uniqueness check---thousands of zoos satisfy this relation---forcing either unsuccessful resampling or a contrived descriptor that itself leaks the identity of \texttt{Australia\_Zoo}. Hub avoidance (Appendix~\ref{apdx:wiki-sampling}) removes this failure mode at sampling time rather than relying on the downstream filters to catch it.

\section{System Prompts}
\label{apdx:system-prompts}

This section provides the complete system prompts used in \textsc{OpenSearch-VL}. The agent system prompt (Figure~\ref{apdxfig:agent-prompt}) is shared across inference and SFT data collection, and is paired with the machine-readable tool schema (Figure~\ref{apdxfig:tool-defs}) that is injected into the model context as \texttt{<tools>...</tools>} for OpenAI-style function calling. The two reward judge prompts (Figures~\ref{apdxfig:acc-judge} and~\ref{apdxfig:query-judge}) are used during RL training to compute $r_{\text{acc}}$ and $r_{\text{query}}$, respectively. For final benchmark reporting, we additionally adopt a GPT-4o judge prompt (Figure~\ref{apdxfig:bench-judge}) that is aligned with the evaluation protocol of Vision-DeepResearch~\citep{huang2026vision}, so that our numbers remain directly comparable to prior multimodal deep-research work.

\setcounter{figure}{7}
\begin{figure}[H]
\centering
\begin{promptbox}[prompttea]{Accuracy Reward Judge Prompt ($r_{\mathrm{acc}}$)}
You are an impartial judge evaluating whether a deep research report contains the correct answer.

[Question]
\{question\}

[Correct Answer]
\{reference\_answer\}

[Deep Research Report]
\{assistant\_answer\}

Task: Determine if the deep research report contains the correct answer anywhere in its content.

Instructions:
\begin{enumerate}[leftmargin=*,nosep]
\item Read through the entire research report carefully
\item Look for the correct answer anywhere in the report (it may be embedded in paragraphs, tables, or sections)
\item Check if the information in the report is consistent with the correct answer
\item The answer does NOT need to be in a specific format or labelled as ``final answer''
\item Provide your reasoning
\item Answer with ``yes'' if the report contains the correct answer, ``no'' if it doesn't or contradicts it
\end{enumerate}

Output format:
correct: [yes/no]
reasoning: [your explanation]
\end{promptbox}
\caption{The GPT-4o judge prompt used to compute the accuracy reward $r_{\text{acc}} \in \{0,1\}$ during RL training.}
\label{apdxfig:acc-judge}
\end{figure}

\setcounter{figure}{9}
\begin{figure}[htbp]
\centering
\begin{promptbox}[promptamber]{Benchmark Evaluation Judge Prompt (GPT-4o)}
You are an impartial judge evaluating whether a deep research report contains the correct answer.

[Question]
\{question\}

[Correct Answer]
\{correct\_answer\}

[Deep Research Report]
\{response\}

Task: Determine if the deep research report contains the correct answer anywhere in its content.

Instructions:
\begin{enumerate}[leftmargin=*,nosep]
\item Read through the entire research report carefully
\item Look for the correct answer anywhere in the report (it may be embedded in paragraphs, tables, or sections)
\item Check if the information in the report is consistent with the correct answer
\item The answer does NOT need to be in a specific format or labelled as ``final answer''
\item Provide your reasoning
\item Answer with ``yes'' if the report contains the correct answer, ``no'' if it doesn't or contradicts it
\end{enumerate}

Output format:
correct: [yes/no]
reasoning: [your explanation]
\end{promptbox}
\caption{GPT-4o judge prompt used for \emph{benchmark} evaluation of \textsc{OpenSearch-VL} and all baselines. We deliberately keep this prompt aligned with the evaluation protocol released by Vision-DeepResearch~\citep{huang2026vision}, so that reported accuracies are directly comparable across systems. Although structurally similar to the RL accuracy-reward judge (Figure~\ref{apdxfig:acc-judge}), this prompt is applied \emph{post-hoc} to final agent trajectories rather than as a training signal, and uses the field names (\texttt{question}, \texttt{correct\_answer}, \texttt{response}) consumed by our evaluation script.}
\label{apdxfig:bench-judge}
\end{figure}

\begin{figure}[H]
\centering
\begin{promptbox}[promptplum]{Query-Quality Reward Judge Prompt ($r_{\mathrm{query}}$)}
You are an impartial judge evaluating the quality and utility of an agent's search trajectory.

[Original Question]
\{question\}

[Ground Truth Answer]
\{ground\_truth\}

[Agent's Final Answer]
\{prediction\}

[Search Trajectory]
\{trajectory\_summary\}

[Evaluation Criteria]
Evaluate the overall utility of the agent's search queries and retrieved results:

\begin{enumerate}[leftmargin=*,nosep]
\item \textbf{Image search utility}: Did image searches retrieve visual evidence that genuinely supports answering the question? Were the images relevant, or just noise?
\item \textbf{Text search utility}: Did text searches find relevant textual information? Were queries well-formed and targeted?
\item \textbf{Query progression}: Did the queries show logical progression---refining, narrowing, or covering different aspects? Or did they repeat / drift aimlessly?
\item \textbf{Complementarity}: Did image and text searches complement each other, providing evidence that one modality alone couldn't supply?
\item \textbf{Evidence vs.\ noise ratio}: What fraction of retrieved results actually contained useful evidence versus irrelevant content?
\end{enumerate}

Score the overall query utility from 0.0 to 1.0:
- 0.0: No useful information retrieved; all searches irrelevant or failed
- 0.3: Mostly noise with occasional marginally relevant results
- 0.5: Mixed---some useful evidence found but significant noise or inefficiency
- 0.7: Good search strategy; majority of results were relevant
- 1.0: Excellent---targeted, efficient queries that retrieved highly relevant evidence

Output format (strictly follow):
score: [a single float between 0.0 and 1.0]
reasoning: [brief explanation in 2--3 sentences]
\end{promptbox}
\caption{The GPT-4o judge prompt used to compute the query-quality reward $r_{\text{query}} \in [0,1]$ during RL training.}
\label{apdxfig:query-judge}
\end{figure}

\setcounter{figure}{6}
\begin{figure}[htbp]
\centering
\begin{promptbox}[promptblue]{Agent System Prompt (Inference / SFT Data Collection)}
You are an advanced \textbf{Visual Investigation Agent}. Answer user questions with maximum precision by proactively using image-processing and retrieval tools.

\textbf{Core Philosophy: ``Verify, Don't Guess''}
\begin{enumerate}[leftmargin=*,nosep]
\item \textbf{Tool-First Mindset}: small text $\Rightarrow$ \texttt{crop}; blurry $\Rightarrow$ \texttt{sharpen}; tilted $\Rightarrow$ \texttt{perspective\_correct}. Never rely on the internal encoder when a tool gives a sharper view.
\item \textbf{Chain Your Tools}: non-trivial queries usually require a pipeline, e.g.\ \texttt{perspective\_correct}~$\rightarrow$~\texttt{crop}~$\rightarrow$~\texttt{layout\_parsing}.
\item \textbf{External Validation}: whenever the answer depends on facts not purely visible in the pixels, you \emph{must} call \texttt{text\_search}.
\end{enumerate}

\textbf{1. Tool Calling Format}

Function signatures are provided inside \texttt{<tools>...</tools>}. Emit tool calls as one JSON object inside\\
\texttt{<tool\_call>\{"name": <f>, "arguments": <args>\}</tool\_call>}, one call per turn.

\textbf{2. Your Toolbox}

\textcolor{red}{\textbf{\{Tool List\}}}

For each of the seven tools the production prompt specifies its \emph{trigger} (when to call), \emph{params} (JSON schema), and \emph{output} (how to consume the return value). The tools fall into three families: \emph{visual perception} (\texttt{crop}, \texttt{layout\_parsing}); \emph{image enhancement} (\texttt{perspective\_correct}, \texttt{super\_resolution}, \texttt{sharpen}); and \emph{knowledge retrieval} (\texttt{text\_search}, \texttt{image\_search}; image search must be followed by text search).

\textbf{3. Thinking Protocol}

Before any action, emit a \texttt{<think>} block with: (i) \emph{analyse request}; (ii) \emph{assess image quality}---legibility, geometry, and target size, mapping any deficiency to the corresponding enhancement tool; (iii) \emph{identify information gaps}---retrieval needed?; (iv) \emph{formulate plan}---commit to a single next action.

\textbf{Critical reminders}: (a) when \texttt{layout\_parsing} returns text, treat it as ground truth---do \emph{not} fall back to visual guessing; (b) after \texttt{image\_search}, always call \texttt{text\_search} for factual detail; (c) \texttt{text\_search} summaries are already query-focused---trust them and cite the returned URLs.

\textbf{4. Workflow Recipes}
\begin{itemize}[leftmargin=*,nosep]
\item \emph{Unreadable document}: \texttt{perspective\_correct}~$\rightarrow$~\texttt{sharpen}~$\rightarrow$~\texttt{layout\_parsing}.
\item \emph{Dense chart}: \texttt{crop} (region of interest)~$\rightarrow$~\texttt{layout\_parsing}.
\item \emph{Entity identification}: \texttt{image\_search}~$\rightarrow$~\texttt{text\_search} (mandatory follow-up).
\end{itemize}

\textbf{5. Output Rules}
\begin{enumerate}[leftmargin=*,nosep]
\item \textbf{Single action per turn}; wait for its result before the next.
\item \textbf{Think first}: never emit \texttt{<tool\_call>} without a preceding \texttt{<think>}.
\item \textbf{Image refs}: initial image is \texttt{img\_1}; each tool output yields \texttt{img\_2}, \texttt{img\_3}, \ldots; always operate on the latest best version.
\item \textbf{Final answer}: emit \texttt{<response>...</response>} once evidence suffices.
\end{enumerate}

\textbf{6. Execution Example} (tool-use turn)
\begin{flushleft}\ttfamily\footnotesize
<think> Invoice img\_1 is skewed; correct perspective first. </think>\\
<tool\_call>\{"name": "perspective\_correct", "arguments": \{"image": "img\_1"\}\}</tool\_call>
\end{flushleft}
\end{promptbox}
\caption{Condensed agent system prompt used during both inference and SFT trajectory collection. The placeholder \textcolor{red}{\textbf{\{Tool List\}}} stands in for the per-tool description block of the production prompt; concrete \emph{trigger}/\emph{params}/\emph{output} content for each of the seven tools is reproduced in Figure~\ref{apdxfig:tool-defs} (machine-readable schema) and Table~\ref{tab:search-tools} (human-readable summary). Relative to the original prompt used in our codebase, we condense the three long ``critical reminder'' paragraphs into a single line, drop redundant execution examples, and merge per-tool workflow sub-rules into the core philosophy; no behavioural rule is removed.}
\label{apdxfig:agent-prompt}
\end{figure}

\begin{figure}[htbp]
\centering
\begin{toolsjson}{Tool Definitions (OpenAI-compatible JSON schema, injected as \texttt{<tools>})}
{
  "tools": [
    {
      "type": "function",
      "function": {
        "name": "crop",
        "description": "Crop a region from an image; trigger when target covers < 30% of the frame.",
        "parameters": {
          "type": "object",
          "properties": {
            "image":  {"type": "string",  "description": "Image reference, e.g. 'img_1'"},
            "x":      {"type": "integer", "description": "Top-left X coordinate"},
            "y":      {"type": "integer", "description": "Top-left Y coordinate"},
            "width":  {"type": "integer", "description": "Width of the crop region"},
            "height": {"type": "integer", "description": "Height of the crop region"}
          },
          "required": ["image", "x", "y", "width", "height"]
        }
      }
    },
    {
      "type": "function",
      "function": {
        "name": "text_search",
        "description": "Serper + JINA Reader + Qwen3-32B summarisation over top-k web pages.",
        "parameters": {
          "type": "object",
          "properties": {
            "q":     {"type": "string",  "description": "Search query keywords"},
            "hl":    {"type": "string",  "description": "Language code", "default": "en"},
            "top_k": {"type": "integer", "description": "#results to summarise", "default": 5}
          },
          "required": []
        }
      }
    },
    // ---- Remaining tools follow the same schema; full specs in the tool-suite table ----
    { "function": { "name": "layout_parsing",      "required": []         } },
    { "function": { "name": "perspective_correct", "required": ["image"]  } },
    { "function": { "name": "super_resolution",    "required": ["image"]  } },
    { "function": { "name": "sharpen",             "required": ["image"]  } },
    { "function": { "name": "image_search",        "required": ["url"]    } }
  ]
}
\end{toolsjson}
\caption{Machine-readable tool schema produced by our agent codebase and injected into the model context as an OpenAI-style \texttt{<tools>} block. Two representative tools (\texttt{crop} for visual perception and \texttt{text\_search} for knowledge retrieval) are shown in full; the remaining five follow the same schema and are collapsed to their \texttt{name}/\texttt{required} fields---their complete specifications are listed in Table~\ref{tab:search-tools}. At inference time the agent emits tool calls inside \texttt{<tool\_call>\{"name": \ldots, "arguments": \ldots\}</tool\_call>} conforming to the \texttt{parameters} schema.}
\label{apdxfig:tool-defs}
\end{figure}

\FloatBarrier

\section{Tool Definition and Usage}
\label{apdx:tool-definition}

This section details the search-oriented tools integrated within \textsc{OpenSearch-VL}. For each tool, we articulate its core functionality, formalize its input--output signature, describe its backend implementation, and contextualize its operational role within visual search trajectories. The tool set is partitioned into three functional modalities: \emph{Retrieval}, \emph{Image Enhancement}, and \emph{Attention \& Parsing}. Retrieval tools interface with the open web via remote APIs; image enhancement tools operate as lightweight, deterministic local primitives; and attention \& parsing tools combine local spatial priors with remote document-understanding services.

\subsection*{Retrieval Tools}

\begin{itemize}[leftmargin=*,itemsep=5pt]

    \item \textbf{\textsc{TextSearch}}
    \begin{itemize}
        \item \textbf{Functionality:} A composite textual retrieval mechanism comprising three pipelined stages: (i) a Serper-backend search query to retrieve top-$k$ candidate URLs; (ii) a JINA Reader invocation to fetch and normalize the HTML payloads into clean Markdown; and (iii) a Qwen3-32B summarization pass that distills a query-focused 2--4 sentence synopsis per document. The terminal output is a structured list of \texttt{[Passage $i$] (title, url, summary)} objects.
        \item \textbf{Operational Role:} Functions as the primary ``read the web'' primitive when surface-level snippets are insufficient. It is indispensable for resolving complex, multi-hop knowledge queries and retrieving long-tail factual evidence that necessitates reasoning over entire paragraphs.
    \end{itemize}

    \item \textbf{\textsc{ImageSearch}}
    \begin{itemize}
        \item \textbf{Functionality:} A visual-entity and reverse-image search tool powered by the Polaris Lens API. It accepts a publicly routable image URL and returns a structured JSON payload encompassing visually similar images, recognized entities, related domain URLs, and textual captions associated with the query image.
        \item \textbf{Operational Role:} Serves as the critical bridge transmuting purely visual queries into retrievable textual entities. When the agent determines that resolving a query hinges on identifying an unknown visual entity (e.g., a landmark, logo, or public figure), it invokes \textsc{ImageSearch} to extract external semantic grounding, which can subsequently be cross-referenced via \textsc{TextSearch}.
    \end{itemize}

\end{itemize}

\subsection*{Image Enhancement Tools}

\begin{itemize}[leftmargin=*,itemsep=5pt]

    \item \textbf{\textsc{Sharpen}}
    \begin{itemize}
        \item \textbf{Functionality:} A deterministic deblurring operator implemented via OpenCV Unsharp Masking. Parameterized by an input image and an optional sharpening intensity $\alpha$ (default $1.5$), it computes the enhanced image $I_{\text{out}} = (1{+}\alpha)\,I - \alpha\,G_\sigma * I$, where $G_\sigma$ is a Gaussian blur kernel.
        \item \textbf{Operational Role:} Deployed when the input image exhibits pervasive blur or soft edge gradients, which frequently degrade downstream optical character recognition (OCR) or object detection. As a computationally cheap, side-effect-free preprocessing step, the agent can heuristically apply \textsc{Sharpen} prior to reinvoking perceptual tools.
    \end{itemize}

    \item \textbf{\textsc{SuperResolution}}
    \begin{itemize}
        \item \textbf{Functionality:} A deep-learning-based upscaling tool employing the EDSR architecture via OpenCV's \texttt{dnn\_superres} module. It accepts an image and a discrete scale factor (default $\times 4$), emitting a high-resolution reconstruction. For robustness, if the EDSR weights are unavailable in the deployment environment, it gracefully degrades by returning the original image.
        \item \textbf{Operational Role:} Crucial for mitigating resolution bottlenecks, particularly on tightly cropped patches or inherently low-fidelity inputs (e.g., thumbnails). By synthesizing high-frequency details, it substantially elevates the reliability and extraction accuracy of subsequent \textsc{OCR} invocations.
    \end{itemize}

    \item \textbf{\textsc{PerspectiveCorrect}}
    \begin{itemize}
        \item \textbf{Functionality:} An automated perspective-rectification primitive. It executes a Canny edge detection pipeline on the grayscale projection, extracts the maximal quadrilateral contour, and computes a four-point perspective transform to warp the image into a fronto-parallel plane. If a reliable quadrilateral cannot be established, it falls back to the original image alongside a diagnostic warning.
        \item \textbf{Operational Role:} Addresses the pervasive domain shift of real-world captures, such as skewed photographs of documents, receipts, or screens. Rectifying the image geometry dramatically improves the bounding-box precision and text-recognition fidelity of downstream \textsc{OCR} parsing.
    \end{itemize}

\end{itemize}

\subsection*{Attention and Parsing Tools}

\begin{itemize}[leftmargin=*,itemsep=5pt]

    \item \textbf{\textsc{Crop}}
    \begin{itemize}
        \item \textbf{Functionality:} A deterministic spatial-attention primitive. Parameterized by an image and a bounding box coordinate tuple $(x, y, w, h)$, it extracts and isolates the specified rectangular sub-region, writing the artifact to disk for subsequent reference.
        \item \textbf{Operational Role:} Models the human cognitive mechanism of foveating onto a dense sub-region within a cluttered scene (e.g., isolating a single chart in a multi-panel figure). It is the canonical mechanism for the policy to suppress peripheral noise prior to passing a clean, localized patch to downstream tools such as \textsc{OCR}, \textsc{SuperResolution}, or \textsc{ImageSearch}.
    \end{itemize}

    \item \textbf{\textsc{OCR}}
    \begin{itemize}
        \item \textbf{Functionality:} A layout-aware optical character recognition service backed by a remote PaddleX infrastructure. It consumes a base64-encoded image alongside optional flags for chart-recognition and document-orientation classification. It emits a hierarchical list of detected text blocks (categorized into titles, body text, footnotes, etc.), yielding explicit \texttt{block\_label}, \texttt{block\_content}, and a reading-order-reconstructed \texttt{formatted\_text}.
        \item \textbf{Operational Role:} Serves as the agent's primary ``read the image'' capability. Beyond plain character recognition, its preservation of the document's logical layout is indispensable for queries dependent on structural hierarchy (e.g., distinguishing a figure caption from a section title). It is optimally invoked following upstream enhancements (\textsc{PerspectiveCorrect}, \textsc{Sharpen}, \textsc{SuperResolution}) or spatial isolation (\textsc{Crop}).
    \end{itemize}

\end{itemize}

\section{Case Study}

Figure~\ref{fig:case-study} presents a representative end-to-end trajectory of \textbf{OpenSearch-VL} on a knowledge-intensive visual question (``\emph{In what year did this bridge open?}''), whose answer is recoverable neither from the encoder's parametric knowledge nor from any single retrieval call. The agent first dispatches a \textsc{Crop} on the roadside signage to foveate on the most diagnostic sub-region, forwards the cropped patch to \textsc{ImageSearch} to identify the structure as the \textit{Kessock Bridge}, and then issues a targeted \textsc{TextSearch} that corroborates the opening year as \textbf{1982}. The trace exemplifies the compositional ``verify, don't guess'' behavior that the visual--retrieval action space $\mathcal{T}_v \cup \mathcal{T}_s$, the query-quality reward $r_{\text{query}}$, and the fatal-aware masking scheme are jointly designed to incentivize: tools are chained in an order that progressively grounds the question, and the rollout terminates as soon as cross-modal evidence converges.

\begin{figure}[htbp]
  \vspace{20pt}
  \centering
  \includegraphics[width=\linewidth]{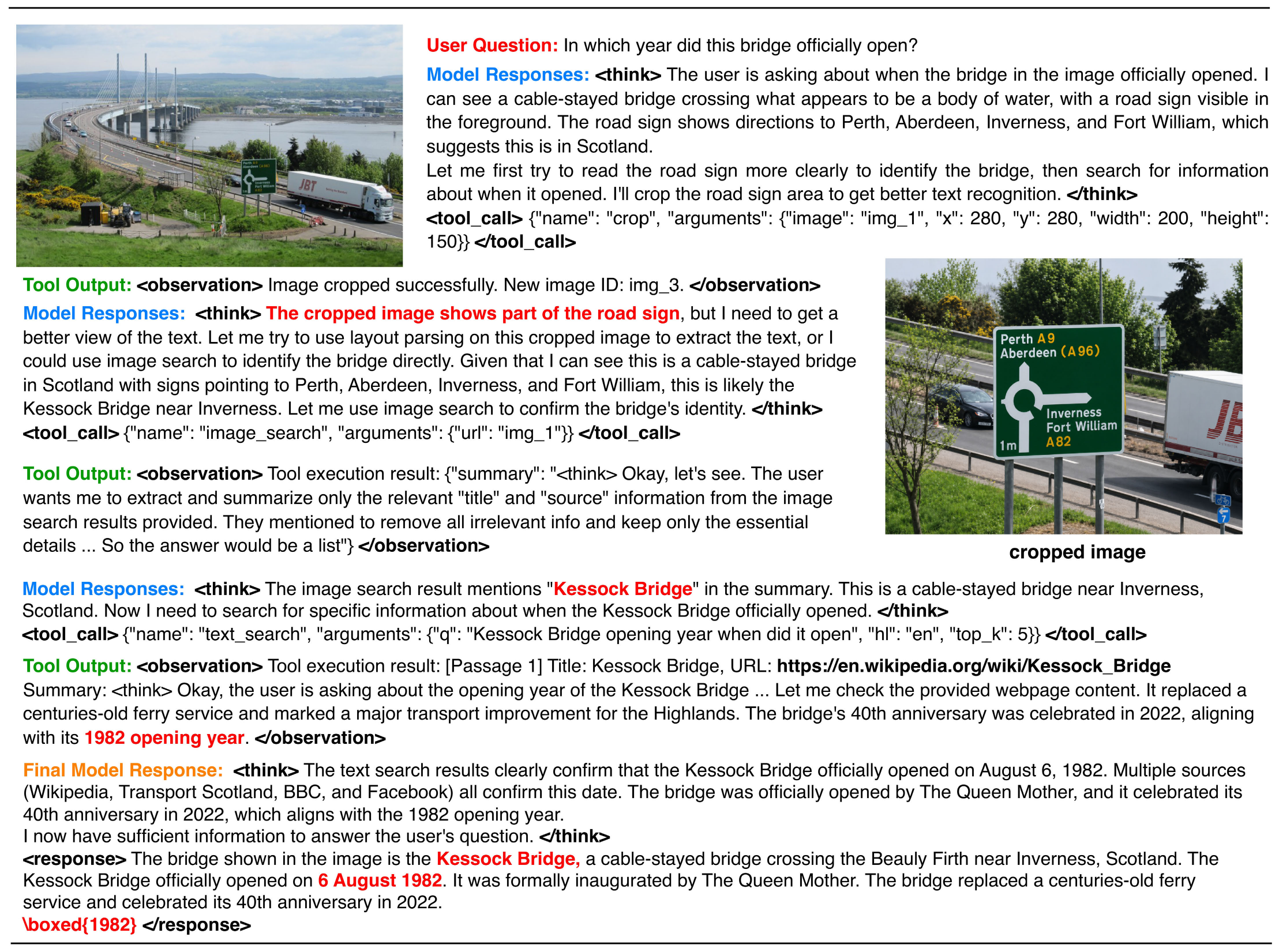}
  \caption{\textbf{Case study of \textbf{OpenSearch-VL}.}
  Given an image-based question about the opening year of a bridge, the model first inspects visual evidence and crops the road sign to obtain finer-grained location cues. It then uses image search to identify the bridge as the Kessock Bridge and issues a targeted text search to verify its official opening date. The retrieved evidence confirms that the bridge opened in 1982, leading to the final answer. This example illustrates how interleaved visual inspection, image retrieval, and textual evidence acquisition can resolve knowledge-intensive visual questions.}
  \label{fig:case-study}
\end{figure}

\stopcontents[appendix]

\clearpage
\newpage

\end{document}